\newcommand{\rotatedd}{{\mathpalette\rotd\relax}}
\newcommand{\rotd}[2]{\rotatebox[origin=c]{190}{$#1d$}}
\theoremstyle{plain}
\newtheorem{theorem}{\protect\theoremname}
\newtheorem{corollary}{Corollary}
\newtheorem{lemma}{Lemma}
\newtheorem{example}{Example}
\theoremstyle{plain}
\newtheorem{proposition}{\protect\propositionname}
\newtheorem{definition}{Definition}
\providecommand{\propositionname}{Proposition}
\providecommand{\theoremname}{Theorem}
\begin{document}

\title{Distributed Algorithms for Feature Extraction Off-loading in Multi-Camera Visual Sensor Networks}

\author{Emil Eriksson, Gy\"{o}rgy D\'{a}n, Viktoria Fodor \\ School of Electrical Engineering, KTH Royal Institute of Technology, Stockholm, Sweden\\\{emieri,gyuri,vfodor\}@kth.se}
\maketitle

\begin{abstract}
Real-time visual analysis tasks, like tracking and recognition, require swift execution of computationally intensive algorithms.
Visual sensor networks can be enabled to perform such tasks by augmenting the sensor network with processing nodes and distributing the computational burden in a way that the cameras contend for the processing nodes while trying to minimize their task completion times.
In this paper, we formulate the problem of minimizing the completion time of all camera sensors as an optimization problem.
We propose algorithms for fully distributed optimization, analyze the existence of equilibrium allocations, evaluate
the effect of the network topology and of the video characteristics, and the benefits of central coordination.
Our results demonstrate that with sufficient information available, distributed optimization can provide low completion times, moreover predictable and stable performance can be achieved with additional, sparse central coordination.
\end{abstract}

\begin{IEEEkeywords}
\normalfont\textbf{Visual feature extraction, Sensor networks, Divisible load theory, Distributed optimization}
\end{IEEEkeywords}

\let\thefootnote\relax\footnote{This work was partly funded by the EU FP7 FET GreenEyes project (296676) and the Modane project funded by SSF.}

\vspace{-4mm}
\section{Introduction}
\label{sec::introduction}
Many real-time computer vision applications, like surveillance, tracking, traffic monitoring and augmented reality require the timely processing of visual information~\cite{Muller2011,rana3dtv14,liemPR11,Zhou2009CVIU,Ayazoglu:2011:DSC:2355573.2356391,helmerICRA10}.
If, in addition, visual information from multiple cameras is available, the application precision can be increased~\cite{naikal2010towards}, and events can be reconstructed in 3D~\cite{Muller:2005:RDE:2322564.2323726}.
With the emergence of cheap cameras and network devices, visual sensor networks (VSNs) could, in principle, enable wide-spread deployment of these popular applications, but in practice visual processing in VSNs faces two challenges.
On the one hand, the high computational complexity of the image processing tasks, paired with the limitations of the sensor nodes, prevents the processing to be performed locally by the camera equipped devices.
On the other hand, considering the delay limit and the energy resources of the network nodes, the large amount of pixel data in the frames makes it infeasible to transmit all data through the sensor network to a central processing node. 
Bandwidth requirements may be reduced by using image downsampling, reducing the color depth, or other video encoding techniques. 
However, any lossy compression may also affect the results of visual processing tasks~\cite{Duan2012}.

A promising solution to overcome these challenges is to augment the sensor network with processing nodes that have suitable memory and computational capacity, and to perform the image processing at these nodes.
By assigning each processing node a part of the complete frame, for example by dividing each frame into multiple sub-areas~\cite{KhanTMM15}, frames can be processed in a distributed manner.
In contrast to the camera nodes, the processing nodes do not need to be calibrated and can be installed or replaced with ease, possibly extending the lifetime of the camera equipped nodes.
Reduced maintenance and extended lifetime are particularly important for VSNs deployed in remote or hazardous areas, or in animal habitats.
As multiple sensors may need to share the processing nodes as well as the wireless channel, optimization of the distribution of the processing tasks is non-trivial.

In this paper we consider the case of local visual feature-based visual analysis~\cite{Zhou2009CVIU,Ayazoglu:2011:DSC:2355573.2356391,Rosten2010,Leuten2011}, where the visual analysis application utilizes the features extracted from the frames captured by multiple sensors. 
The sensors can leverage the capabilities of the processing nodes for detecting and extracting the feature descriptors, and aim at minimizing the time until the extraction of all features from all frames is completed. 
To perform this minimization, the sensors can decide the set of processing nodes used, the schedule of the transmission of the pixel information through the shared wireless channel, and the size of the frame sub-areas sent to the processing nodes.
Due to the dynamic visual content of the frame the sensors need to revise the processing allocation for each video frame.

We develop an analytical model of the system and formulate completion time minimization as an optimization problem for the case when all system parameters are known and the optimization can be performed centrally, and we show that the problem is NP-hard. We propose a centralized
approximation and fully distributed optimization algorithms where the sensors use only locally available information obtained via measurements, as well as based on additional information received through signaling between the processing nodes and the sensors. 
We provide sufficient conditions for the existence of equilibrium allocations for the distributed algorithms, and analyze the convergence properties of the distributed algorithms under synchronous and asynchronous revisions.
Then, we consider the case when the distributed algorithms are supported by periodical centralized coordination.
We use these to investigate how the amount of information available at the sensor nodes affects the achievable completion time.
We use simulations to give insight into the convergence properties and the performance of the algorithms under various VSN topologies and video characteristics. 
Our results demonstrate that distributed optimization can provide low completion times already with limited amount of shared information, moreover predictable and stable performance can be achieved with additional, sparse central coordination.

The rest of the paper is organized as follows.
In Section~\ref{sec::related-work} we review related work.
In Section~\ref{sec::system} we describe the considered system and in Section~\ref{sec::problem} we formulate the problem of completion time minimization.
In Section~\ref{seq::algorithms} we present and analyze fully distributed algorithms for solving the completion time minimization problem, and in Section~\ref{sec::centralized} we introduce centralized and coordinated algorithms.
In Section~\ref{sec::numerical} we present numerical results and we conclude the paper in Section~\ref{sec::conclusion}.

\section{Related work}
\label{sec::related-work}
Visual analysis applications utilizing many camera nodes are discussed among others in~\cite{Muller2011,rana3dtv14} for free viewpoint television, in~\cite{liemPR11,Zhou2009CVIU,Ayazoglu:2011:DSC:2355573.2356391} for localization and tracking and in~\cite{helmerICRA10} for high accuracy object recognition.
The challenge of visual analysis at nodes with limited processing power is addressed in~\cite{Rosten2010,Leuten2011}, defining feature extraction schemes with low computational complexity.
To decrease the transmission bandwidth requirements of pixel information,~\cite{Duan2012,Chao2013} propose lossy image coding schemes optimized for descriptor extraction, while~\cite{Chandrasekhar2010,Jegou2011,Redondi2013} give solutions to decrease the number and the size of the descriptors to be transmitted.
Considering video sequences with temporal correlation,~\cite{Ta2009} limits the frame areas of interest, while~\cite{SullivanTCS12,BaraffioICIP13} proposes intra- and inter-frame coding for the descriptors.
However, results in~\cite{RedondiDSP13} show that even under optimized extraction and coding, the processing at the camera sensor or at the sink node of the VSN leads to significant delay, which motivates the introduction of in-network processing in VSNs~\cite{EDF2014DCOSS,BaroffioCCRTDEFAM2014ICIP,EDF2016TMC,redondi2015cooperative,redondi2015mathematical}.

Optimal load scheduling for distributed systems is addressed in~\cite{BharadwajCC2003}, in the framework of Divisible Load Theory (DLT), with the general result that minimum completion time is achieved, if all processors finish the processing at the same time.
In~\cite{mequanint2006wireless,li2007sensing} DLT is used in the context of wireless sensor networks with single and multi level tree network topologies. 
Usually three decisions need to be made: the subset of the processors used, the order they receive their share of workload, and the division of the workload.
Unfortunately, the results are specific to a given system setup, and therefore scheduling solutions are derived for given, simplified systems with simple topologies~\cite{BharadwajTPDS1994,BharadwajTPDS2000}.
In~\cite{EDF2016TMC} we show that the application of DLT for distributed visual processing is non-trivial even in the case of a single sensor node, due to the transmission overhead introduced by distributed feature extraction, and due to the dynamism of the frame content in the video. We introduce the distributed, multiple sensor, multiple processing node case in~\cite{EDF2015NW,EPD2015ICMEW}, and derive basic convergence result. In this paper we provide a rigorous evaluation of distributed solutions as well as solutions with central coordination support, and evaluate the effect of the network topology as well as the effect of the temporal dynamism of the video content.
Unlike~\cite{EDF2015NW,EPD2015ICMEW}, in this paper we consider the realistic constraint that a processing node can not start processing a slice until it finishes receiving the data for the slice, prove that the completion time minimization problem is NP-hard, and propose an efficent approximation based on nearest neighbor search.

Related to our work is the problem of learning in non-cooperative games~\cite{Monderer96,Monderer96b}.
Studies of learning in games usually consider models of perfect information for the analysis of convergence~\cite{Monderer96,Monderer96b,Pacifici12jsac,Berger07}.
Recent works on experimentation dynamics and regret testing models consider that players can only observe their own payoffs~\cite{Foster06,Cigler2011,Pradelski2012}, but these learning models provide asymptotic convergence guarantees, and thus convergence is prohibitively slow.
In this paper we consider two models of imperfect information, and provide equilibrium existence and convergence results.
Our results also highlight the potential trade-off between predictable and good performance in learning. 

\section{System model}
\label{sec::system}
We consider a visual sensor network (VSN) that consists of a set of sensor nodes $\mathcal{S}$, $|\mathcal{S}|=S$, a set of processing nodes $\mathcal{N}$, $|\mathcal{N}|=N$,
and a central coordinator with significant computational power.
Sensor nodes $s \in \mathcal{S}$ capture periodically a sequence $\mathcal{I}_s=\{1,...\}$ of frames of width $w$ pixels. 
We call the set of $S$ frames captured simultaneously by the sensor nodes a \emph{multi-view} frame.
The objective of the system is to process the multi-view frames in the shortest possible time.

For the delegation of the computation, sensor node $s$ divides frame $i$ into $V_s^i\leq N$ vertical slices.
This scheme was referred to as area-split in~\cite{KhanTMM15,KhanDSP13}.
We define slice $v$ using its normalized leftmost and rightmost horizontal coordinates, $x_{s,v-1}^{i}$ and $x_{s,v}^{i}$, i.e., $x_{s,0}^{i}=0$ and $x_{s,V_s^i}^{i}=1$,
and we define the cutpoint location vector for frame $i$ as $x_{s}^{i}=\{x_{s,0}^{i}, \dots, x_{s,V_s^i}^{i}\}$. 
We call the vector of cutpoint location vectors the \emph{cutpoint location profile} and denote it by ${\bf x^i} = \left\lbrace x_{1}^{i},\ldots,x_{s}^{i} \right\rbrace$.
For convenience, we use $y_{s,v}^{i}=x_{s,v}^{i}-x_{s,v-1}^{i}$ to denote the normalized width of slice $v$, and
we define $y_{s,v}^{i}=0$ for $v\leq 0$ and for $v>V_s^i$. Thus, by definition, $\sum_{v=1}^{V_s^i}{y_{s,v}^{i}}=1$.
Sensor $s$ transmits slice $1\leq v\leq V_s^i$ to processing node $d_s^i(v)\in\mathcal{N}$ for processing. We define $d_s^i$ as a sequence with $V_s^i$ distinct elements,
and with slight abuse of notation we use $n\in d_s^i$ if $d_s^i(v)=n$ for some $1\leq v \leq V_s^i$, i.e., node $n$ is used by sensor $s$. Thus, $d_s^i$ defines a
partial permutation of $\{1,\ldots,S\}$. We use
the notation $\rotatedd_s^{i}(n)$ for the slice that sensor $s$ assigns to node $n$.
That is, we refer to $d_s^i$ as the assignment by sensor $s$, and to $\rotatedd_s^{i}$ as its inverse.
Furthermore, we refer to ${\bf d_{s}^{i}} = \left\lbrace d_1^i,\ldots,d_s^i \right\rbrace$ as the \emph{assignment profile}.

\subsection{Visual feature extraction}
Each processing node $n$ computes local visual features from the slices assigned to it. A processing node starts to process a slice as soon as it is completely received, and performs parallel processing of multiple slices. In applications where features extracted from frames captured by multiple cameras are needed, e.g., in the case of multi-camera tracking for updating a hidden-Markov model or a particle filter, the extraction of the features from all frames should finish at the same time. We thus consider that if a processing node $n$ has to process slices from different sensors, then it allocates its processing power such that the processing of all slices is completed at the same time.

The computation of local features starts with interest point detection, by applying a blob detector
or an edge detector at every pixel of the slice~\cite{Leuten2011,Bay2008,Calonder2010}.
For each pixel, the detector computes a response score based on a square area centered around it.
We denote the side length of the square normalized by the width of the frame by $2o$. The side length $2ow$ of the square (in pixels) depends on the applied detector.

A pixel is identified as an interest point if the response score exceeds the detection
threshold $\vartheta\in\boldsymbol{\Theta}\subseteq\mathbb{R}^{+}$. We denote the distribution of interest points of frame $i$ at sensor $s$ as $f_{s}^{i}$, with CDF of $F_{s}^{i}$ .
The time it takes to detect interest points can be modeled as a linear function of the frame size in pixels
and of the number 
$\xi_{s,v}^{i}=F_{s}^{i}(x_{s,v}^{i})-F_s^i(x_{s,v-1}^{i})$ 
of detected interest points; this model was validated on a BeagleBone Black single board computer in~\cite{EDF2014DCOSS,EDF2016TMC}
and on an Intel Imote2 platform in~\cite{RedondiDSP13}. We can thus model the detection time for slice $v$ from sensor $s$ at processing node $n$ as a function of the slice width $y_{s,v}^{i}$ and of the number $\xi_{s,v}^{i}$ of detected interest points as an affine function $P_n(y_{s,v}^{i}+\alpha_f \xi_{s,v}^{i})$, where $P_n$ is the per unit processing time of node $n$ and $\alpha_f$ is a normalization constant.

After detection, a feature descriptor is extracted for each interest point by comparing pixel intensities.
The time it takes to extract the descriptors can be modeled as a linear function of the number $\xi_{s,v}^{i}$ of
interest points detected, as shown in~\cite{EDF2014DCOSS}. We can thus model the detection and extraction time as $P_n(y_{s,v}^{i}+(\alpha_f+\alpha_e) \xi_{s,v}^{i})=P_n(y_{s,v}^{i}+\alpha_d \xi_{s,v}^{i})$.
We consider that normalization constant $\alpha_f$, $\alpha_e$ and thus $\alpha_d$ are the same for all processing nodes, which is reasonable if the nodes have a similar computing architecture (e.g., instruction set). Due to parallel processing, $P_n$ depends on the number of slices that processing node $n$ is processing in parallel.

\subsection{Communication Model}
The nodes communicate using a wireless communication protocol, such as IEEE 802.15.4 or IEEE 802.11. Sensors transmit to one processing node at
a time, while processing nodes can receive MAC packets from many sensors, through the shared channel. Transmissions suffer from packet losses due to wireless channel impairments.
As measurement studies show, the loss burst lengths at the receiver have low mean and variance in the order of a couple of MAC frames~\cite{LacanWIOPT2006,HarwellVTC2004,GuhaTVT2008}.
Therefore, a widely used model of the loss process is a low-order Markov-chain, with fast decaying correlation and short mixing time.
In the system we consider, the amount of data to be transmitted to the processing nodes is relatively large, and therefore it is reasonable to model the average transmission time from sensor $s$ to processing node $n$, including the retransmissions, as a linear function of the amount of transmitted data.
We denote the transmission time coefficient by $C_{s,n}$, which can be interpreted as the average per frame transmission time.
As the throughput is close to stationary over short timescales, $C_{s,n}$ can be estimated~\cite{Petrova2006wcnc}.
When there are several sensors transmitting data, the MAC protocol provides airtime fairness for the transmitters~\cite{Joshi2008TMC}, thus the actual transmission time coefficient is proportional to the number of sensors transmitting.
For example, when $S$ sensors are transmitting, the actual coefficient is $S C_{s,n}$.

Recall that interest point detection involves applying a square filter of size $2o$ at each pixel.
Thus, for correct operation, each slice $v$ has to be appended by an overlap area of width $o$ on one or both sides.
The resulting regions of overlap in adjacent slices could in principle be transmitted in multicast to the appropriate processing nodes, but experimental results show that multicast transmission suffers from low throughput in practice due to lack of link layer retransmissions and missing channel quality information~\cite{Kostuch2009Wmnc}, we thus consider that all data transmissions are done using unicast.

We consider that the transmission time coefficient between the sensors and the central coordinator is high enough so that
transmitting pixel information to the central coordinator is infeasible, but the sensor nodes, the processing nodes, and the coordinator
can exchange control information with a delay that is negligible. 

\begin{figure}[t]
\includegraphics[width=1\columnwidth]{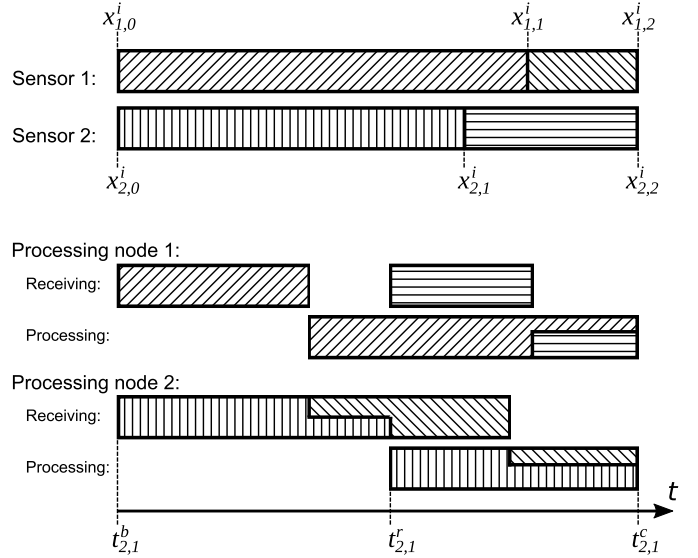}
\caption{Transmission and processing in a two sensor, two processing node system. The pattern identifies the slice being transmitted or processed by the nodes. Where two patterns overlap in time at the same node, the height of the pattern is proportional to the fraction of transmission or processing resources assigned to the slice. The assigment functions are $d_1^i = \left\lbrace 1,2 \right\rbrace, \, d_2^i = \left\lbrace 2,1 \right\rbrace$. }
\label{fig::completion-illustration}
\vspace{-6mm}
\end{figure}

\section{Completion Time and Problem Formulation}
\label{sec::problem}

In the following we define the completion time of the system as a function of the assignment profile ${\bf d^i}$ and the cutpoint location profile ${\bf x^i}$. 
Based on the expression for system completion time we formulate the completion time minimization problem as an integer programming problem.

\subsection{Completion Time Model}
Using the model of transmission and processing above, let us consider the completion time of the processing of frame $i\in \mathcal{I}_s$ captured by sensor $s$.
Figure~\ref{fig::completion-illustration} illustrates the transmission and processing of slices from $S=2$ sensor nodes to $N=2$ processing nodes.
Let us denote by $t^b_{s,v}$ the time instant when processing node $d_{s}^i(v)$ receives the first bit of slice $v$ from sensor $s$, by $t^r_{s,v}$ the time instant when processing node $d_{s}^i(v)$ receives the last bit of slice $v$ from sensor $s$. 
Unlike in previous works~\cite{EDF2015NW,EPD2015ICMEW}, processing of slice $v$ may only start once the slice is completely received at $t^r_{s,v}$. 

Observe that the time $t^r_{s,v} - t^b_{s,v}$ it takes node $s$ to transmit slice $v$ to processing node $n$ depends on the number of
sensor nodes that are transmitting simultaneously, which
depends on the cutpoint location vectors $x_{s^\prime}^{i}$ and on the assignment functions $d_{s^\prime}^i$ of the other sensors.
To capture the dependence of the transmission time on  $(x_{s}^{i})_{s\in\mathcal{S}}$ and  $(d_s^i)_{s\in\mathcal{S}}$, we define the experienced transmission time coefficient at processing node $n=d_{s}^i(v)$ as
\begin{equation}
\tilde{C}_{s,n}({\bf x^i},{\bf d^i})=\left\{\begin{array}{lr}
\hspace{-2mm}(t^r_{s,v} - t^b_{s,v})/(y_{s,v}^i+o),  &  \hspace{-3mm} v=1,V_s^i \\ 
\hspace{-2mm}(t^r_{s,v} - t^b_{s,v})/(y_{s,v}^i+2o), &  \hspace{-3mm} 1<v<V_s^i 
\end{array} .\right.
\label{eq::node1-communication-coeff}
\end{equation}

Similarly, the time it takes processing node $n=d_{s}^i(v)$ to complete the processing of slice $v$ sent by sensor $s$ depends on whether or not the processing node has to process
slices from other sensors simultaneously. We define the experienced processing time coefficient of sensor $s$ at processing node $n$ as
\begin{equation}
\tilde{P}_{s,n}({\bf x^i},{\bf d^i})=(t^c_{s,v}- t^r_{s,v})/(y_{s,v}^i+\alpha_d\xi_{s,v}^i).
\label{eq::node1-processing-coeff}
\end{equation}

We can express the completion time of slice $v$ delegated by sensor $s$ to processing node $n=d_s^i(v)$ as a function of the experienced transmission
time coefficients and of the experienced processing time coefficients. For the first slice, i.e., $n=d_s^i(1)$, we have
\begin{equation}
T_{s,n}^i({\bf x^i},{\bf d^i}) = \tilde{C}_{s, n}({\bf x^i},{\bf d^i}) [y_{s,1}^{i}+o] + \tilde{P}_{s,n}({\bf x^i},{\bf d^i}) [y_{s,1}^{i} + \alpha_d\xi_{s,1}^i].
\label{eq::node1-completion-time}
\end{equation}
For the remaining slices, i.e., $n=d_s^i(v)$, $v>1$, the completion time depends also on the transmission times of previous slices
\begin{eqnarray}
  T_{s,n}^{i} ({\bf x^i},{\bf d^i})&=& \tilde{C}_{s, d^i_s(1)}({\bf x^i},{\bf d^i}) [y_{s,1}^{i}+o] + \tilde{C}_{s,n}({\bf x^i},{\bf d^i})2o \nonumber\\
   &+& \sum_{\nu=2}^{v-1}{\tilde{C}_{s, d^i_s(\nu)}({\bf x^i},{\bf d^i}) [y_{s,\nu}^{i}+2o]} \nonumber \\
   &+&  \tilde{P}_{s,n}({\bf x^i},{\bf d^i}) [y_{s,v}^{i}+\alpha_d\xi_{s,v}^{i}].\label{eq::node-completion-time}
\end{eqnarray}
Finally, we define the completion time of frame $i$ for sensor $s$ as the greatest completion time among all processing nodes
\begin{equation}
	T_s^i({\bf x^i},{\bf d^i})= \max_{n\in d_s^i} (T_{s,n}^{i}({\bf x^i},{\bf d^i})), 
\label{eq::sensor-completion-time}
\end{equation}
and the system completion time of frame $i$ as 
\begin{equation}
	T^i({\bf x^i},{\bf d^i})= \max_{s\in \mathcal{S}} (T_{s}^{i}({\bf x^i},{\bf d^i})). 
\label{eq::system-completion-time}
\end{equation}

\subsection{Completion Time Minimization (CTM) Problem}
Given the set of sensor nodes $\mathcal{S}$, the set of processing nodes $\mathcal{N}$,
the transmission and processing time coefficients $C_{s,n}$ and $P_n$,
we can formulate the completion time minimization (CTM) problem for a single multi-view frame $i$ as an integer programming problem
\begin{eqnarray}
              & & \lefteqn{\min_{({\bf x^i},{\bf d^i})} t}                                            \label{eq::ip-objective}\\
\textrm{s.t.} & T_s^i({\bf x^i},{\bf d^i})              &\leq \; t, \;\;\;  \forall s\in{\cal S}      \label{eq::minmax-completion}\\
              & x_{s,v-1}^i - x_{s,v}^i                 &\leq \;-o \;\;1 \leq v \leq V_s^i            \label{eq::increasing}\\
              & x_{s,v}^iw                              &\in  \;\{1,\ldots,w\} \;\;\;1 \leq v \leq V_s^i \label{eq::integer}
\end{eqnarray}
where $w$ is the width of the individual frames in pixels. Constraint \eqref{eq::minmax-completion} ensures that all completion times are less than or equal to $t$, \eqref{eq::increasing} ensures that all slices are larger than the overlap $o$, while \eqref{eq::integer} reflects that a frame can be divided at pixel positions only.

Solving the CTM problem in the considered VSN scenario faces three major challenges.
First, sensors may not have sufficient information to formulate the CTM problem, because both the transmission time coefficients and the interest point distribution are unknown before processing a frame.
Second, due to the computational constraints of the sensors and due to the complexity of the CTM problem it may be infeasible to solve even small instances of the CTM problem in the sensors.
Third, even if every sensor could solve the optimization problem, there may be multiple solutions, and deciding which solution to use would require communication
between the sensors, which introduces delay and control traffic overhead.

The following theorem shows that the CTM problem is indeed computationally hard.
\begin{theorem}
The optimization problem \eqref{eq::ip-objective}-\eqref{eq::integer} is NP-hard.
\label{th::np-hardness}
\end{theorem}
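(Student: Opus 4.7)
The plan is to prove Theorem~\ref{th::np-hardness} by a polynomial-time reduction from a classical strongly NP-hard problem to a carefully restricted instance of CTM. The natural candidate is \textsc{3-Partition} (or \textsc{Partition} / $P||C_{\max}$ for a weaker statement), because the CTM objective is a min--max load and the assignment of slices to processing nodes is structurally identical to distributing items across bins.

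The first step is to fix the auxiliary degrees of freedom in CTM so as to isolate the combinatorial core. In the reduced instance I would set $V^i_s=1$ for every sensor $s$ (so each sensor produces a single slice of width $y^i_{s,1}=1$, with $x^i_{s,0}=0$ and $x^i_{s,1}=1$, and the cutpoint constraints~\eqref{eq::increasing}--\eqref{eq::integer} are vacuously satisfied), $o=0$, $C_{s,n}=0$ for all pairs $(s,n)$, and $P_n=1$ for all $n$. Under these choices \eqref{eq::node1-communication-coeff} gives $\tilde{C}_{s,n}=0$ regardless of the airtime-fairness rescaling, and \eqref{eq::node1-completion-time} reduces to $T^i_{s,n}=\tilde{P}_{s,n}(1+\alpha_d\xi^i_{s,1})$. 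The modeling assumption that a processing node allocates its capacity so that all currently assigned slices terminate simultaneously then gives the explicit expression $T_{s,n}^i=\sum_{s'\,:\,d^i_{s'}(1)=n}(1+\alpha_d\xi^i_{s',1})$ for every sensor $s$ with $d_s^i(1)=n$. Consequently the system completion time~\eqref{eq::system-completion-time} is exactly the makespan of a schedule that places $S$ items of sizes $w_s:=1+\alpha_d\xi^i_{s,1}$ on $N$ identical machines.

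The second step is the reduction proper. Given a \textsc{3-Partition} instance with $3N$ positive integers $a_1,\ldots,a_{3N}$ satisfying $\sum_s a_s = NB$ and $a_s\in(B/4,B/2)$, I would instantiate the restricted CTM model with $S=3N$ sensors, $N$ processing nodes, and choose interest-point distributions $f^i_s$ so that $w_s=a_s$ (for instance, by taking $\alpha_d=1$ and $\xi^i_{s,1}=a_s-1$, which is admissible since $\xi^i_{s,1}$ is an aggregate count and $f^i_s$ is otherwise unconstrained). By the analysis above, the minimum system completion time of this CTM instance is at most $B$ iff the original \textsc{3-Partition} instance is a yes-instance. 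Since \textsc{3-Partition} is strongly NP-hard, so is the decision version of CTM, and hence the optimization problem~\eqref{eq::ip-objective}--\eqref{eq::integer} is NP-hard.

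The main obstacle I anticipate is not the combinatorial argument itself but the modeling bookkeeping: one must verify that, with the chosen restriction, the airtime-fairness rescaling in~\eqref{eq::node1-communication-coeff} is immaterial (it multiplies zero), that the ``no processing before full reception'' rule is satisfied vacuously because $t^r_{s,1}=t^b_{s,1}=0$, that the integer cutpoint constraint~\eqref{eq::integer} is trivial when only $x=0$ and $x=1$ are used, and that the equal-finishing-time allocation truly yields the additive aggregate $\sum_s w_s$ on each node. Once these points are dispatched, the reduction collapses CTM onto a textbook strongly NP-hard scheduling problem in a few lines.
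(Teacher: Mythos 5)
Your reduction has the right general shape (collapsing CTM onto a min--max scheduling problem, as the paper also does), but it contains a genuine gap: you treat $V_s^i=1$ as a restriction you are free to impose on the instance, when in fact the number of slices and the cutpoint locations are \emph{decision variables} of the optimization \eqref{eq::ip-objective}--\eqref{eq::integer}, not inputs. The only instance data you control are $\mathcal{S}$, $\mathcal{N}$, $C_{s,n}$, $P_n$, $o$ and the frames. Having chosen $o=0$, constraint \eqref{eq::increasing} permits each sensor to cut its frame into up to $N$ slices at no overlap cost, and since $C_{s,n}=0$ there is no transmission penalty either; the optimizer will therefore split every frame across all $N$ processing nodes and equalize the load, achieving a makespan of essentially $\frac{1}{N}\sum_s w_s=B$ (up to pixel granularity) \emph{regardless} of whether the \textsc{3-Partition} instance is a yes-instance. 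The atomicity of the items $a_s$, which is the entire content of \textsc{3-Partition}, is destroyed, and the ``iff'' in your second step fails in the backward direction. This is precisely the point the paper's proof is built around: it chooses $o=0.5$ so that splitting a frame is \emph{never} profitable (each extra slice costs at least an extra half-frame of transmitted overlap while $P_n=0$ makes parallel processing worthless), which is what forces each sensor's load to behave as an indivisible job; the job lengths are then encoded in the transmission coefficients $C_{s,n}=l_j$ rather than, as in your construction, in the interest-point counts.

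The gap is repairable, but not for free. Either adopt the paper's device of a large overlap to forbid splitting outright, or, if you want to keep the processing-side encoding, concentrate all $\xi_{s,1}^i=a_s-1$ interest points of sensor $s$ at a single pixel (so that no cutpoint can divide them) and then argue that the residual divisible width term $y_{s,v}$, which contributes at most $1$ per sensor versus $\alpha_d(a_s-1)$ for the atomic part, can be made negligible by polynomially rescaling the $a_s$ --- an error analysis you would have to carry out explicitly, and which your current write-up does not anticipate (your list of ``bookkeeping'' items even assumes only the cutpoints $x=0$ and $x=1$ are ever used, which is exactly the unproven claim). As a secondary point, you should also verify that the ``equal finishing time'' parallel-processing rule really yields the additive load $\sum_{s'}w_{s'}$ at each node under the paper's model; this does hold here because all slices arrive at $t=0$ when $C_{s,n}=0$, but it is the kind of step that needs to be stated rather than assumed.
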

\begin{proof}
We prove the NP-hardness via reduction from the \emph{Multiprocessor scheduling} problem, which is known to be NP-hard. Given a set $\mathcal{J}$ of $|\mathcal{J}|=J$ jobs where job $j\in \mathcal{J}$ has length $l_j$ and $M$ number of processors, what is the minimum possible time required to schedule all jobs in $\mathcal{J}$ on $M$ processors such that none overlap. We reduce the problem by setting $S=J$. $N=M$, $C_{s,n}=l_j$ for $s=j=1...S $, $P_{s}=0$, $o=0.5$.

Due to the large $o$ value, it is never beneficial to divide the load to more than one processor, and consequently the solution to \eqref{eq::ip-objective}-\eqref{eq::integer} gives the job assignment with minimum finishing time in the multiprocessor scheduling problem.
\end{proof}

\subsection{Solution Architectures for the CTM Problem}
To address the above challenges, in the rest of the paper we propose and compare three solution architectures for solving the CTM problem.
In all three architectures we address the first challenge by utilizing low complexity interest point distribution prediction~\cite{EDF2016TMC}, but the three architectures
differ in terms of the role of the sensors and of the central entity.
\begin{itemize}
\item
\emph{Distributed operation}: Each sensor $s$ optimizes $(d^i_s,x_s^i)$ in a distributed manner based on information available to it. We introduce the distributed algorithms used by the sensors in Section~\ref{seq::algorithms}.
  \item
\emph{Centralized operation}: A central entity computes the assignment~$({\bf x^i},{\bf d^i})$  periodically  after every $R$ multi-view frames, and sends it to the sensors. The sensors use the same assignment for the subsequent $R$ multi-view frames, i.e., they do not optimize either the assignment or the allocation. We refer to $R$ as the inter-refresh time. We introduce the algorithm used by the central entity in Section~\ref{sec::central-alg}.
\item
  \emph{Coordinated operation}: A central entity computes the assignment~$({\bf x^i},{\bf d^i})$  periodically after every $R$ multi-view frames, and sends it to the sensors. The sensors cannot update the assignment ${\bf d^i}$ but may update the allocation $x_s^i$ using a distributed algorithm. We describe coordinated operation in Section~\ref{sec::coordinated-alg}
\end{itemize}

\section{Distributed Algorithms}
\label{seq::algorithms}
In what follows we propose four distributed algorithms that 
differ in terms of the information available to sensor $s$ before processing frame $i$, denoted by $\Upsilon_{s}^{i}$. 
Node $s$ may obtain the information through measurements (e.g., the transmission and processing time for its frame $i-1$), or through signaling from other processing nodes (e.g., slice sizes of other sensors).

To make the available information $\Upsilon_{s}^{i}$ explicit, we introduce the predicted mean transmission time coefficient $\bar{C}_{s,n}(x_s^i,d_s^i|\Upsilon_{s}^{i})$,
and the predicted mean processing time coefficient $\bar{P}_{s,n}(x_s^i,d_s^i|\Upsilon_{s}^{i})$. 
Based on the information $\Upsilon_{s}^{i}$, the predicted mean completion time of sensor $s$ can be expressed for $n=d_s^i(1)$ as
\begin{eqnarray}
\bar{T}_{s,n}^{i}(x_s^i,d_s^i|\Upsilon_{s}^{i}) &=& \bar{C}_{s, n}(x_s^i,d_s^i|\Upsilon_{s}^{i}) [o + y_{s,1}^{i}] \label{eq::node1-completion-time-expected}\\
& +&  \bar{P}_{s,n}(x_s^i,d_s^i|\Upsilon_{s}^{i}) [y_{s,1}^{i} + \alpha_d\xi_{s,1}^i], \nonumber
\end{eqnarray}
and for the remaining slices, i.e., $n=d_s^i(v)$, $v>1$, we have
\begin{eqnarray}
  \bar{T}_{s,n}^{i} (x_s^i,d_s^i|\Upsilon_{s}^{i})&=& \bar{C}_{s, d^i_s(1)}(x_s^i,d_s^i|\Upsilon_{s}^{i}) [y_{s,1}^{i}+o]  \label{eq::node-completion-time-expected}\\
   &+& \sum_{\nu=2}^{v}{\bar{C}_{s, d^i_s(\nu)}(x_s^i,d_s^i|\Upsilon_{s}^{i}) [y_{s,\nu}^{i}+2o]} \nonumber \\
   &+&  \bar{P}_{s,n}(x_s^i,d_s^i|\Upsilon_{s}^{i}) [y_{s,v}^{i}+\alpha_d\xi_{s,v}^{i}].\nonumber
\end{eqnarray}
Finally, sensor $s$ aims to minimize its predicted completion time
\begin{equation}
	\bar{T}_s^i(x_s^i,d_s^i|\Upsilon_{s}^{i})= \max_{n\in d_s^i} \bar{T}_{s,n}^{i}(x_s^i,d_s^i|\Upsilon_{s}^{i}).
\label{eq::sys-completion-time-expected}
\end{equation}
The times when the sensors can revise their allocations are determined by the \emph{revision opportunity}, which can be either
synchronous or asynchronous.
\begin{definition}{\bf Revision opportunity:}
\label{def::revision-opportunity}
Asynchronous revision allows one sensor $s\in{\cal S}$ to update its allocation upon each frame $i$.
Synchronous revision allows every sensor to update its allocation upon every frame $i$.
\end{definition}
While in a VSN synchronous revision is straightforward to implement, asynchronous revision could, e.g., be implemented by configuring
a static revision order through modulo division of the frame sequence number, letting sensor $s$ revise its allocation at frame sequence numbers $i \mod S = s$.

A basic requirement for the visual sensor network design is to achieve predictable, stable performance.
In the case when the frame contents do not change, stability can be guaranteed if the distributed algorithms reach an equilibrium allocation where the sensors settle. As a result the predicted completion time remains constant. We formally define an equilibrium as follows.

\begin{definition}{\bf Equilibrium:}
An equilibrium is an assignment profile $(d_s^i)_{s\in{\cal S}}$ and $(x_s^i)_{s\in{\cal S}}$ compared to which no sensor $s$
can decrease its predicted completion time by deviating unilaterally, given the information $\Upsilon_{s}^{i}$.
\end{definition}
In the case of perfect information (i.e., if $\Upsilon_{s}^{i}$ contains all transmission
time coefficients, processing time coefficients and interest point distributions), the notion of an equilibrium corresponds to the notion of a Nash equilibrium in game theory~\cite{Monderer96,Pacifici12jsac}.

In the following we analyse the stability of the distributed algorithms.
We consider two scenarios, according to $\Upsilon_{s}^{i}$, the information available at the sensors.
The algorithms are characterized by the \emph{combination} of the \emph{calculation} of the assignment based on $\Upsilon_{s}^{i}$, and by the \emph{revision opportunity}, that is, when changes are performed at the sensors. 
For the stability analysis below we assume that the interest points are evenly spread along the horizontal axis in every frame, i.e. $\xi_{s,v}^{i}$ is proportional to $y_{s,v}^{i} \forall s,v$. We can thus omit the interest point distribtion function from the solution, without altering the resulting allocation.  
For notational convenience we omit the index $i$ whenever the predicted transmission time and processing time coefficients are used.

\subsection{Measurement Only (MO) Information}

We start with considering a system with no signaling between the processing nodes and the sensor nodes, thus,
all parameters need to be estimated by the sensors. We call this the measurement only (\emph{MO}) scenario. Sensor $s$ measures communication and processing times, that is, $t^r_{s,v}-t^b_{s,v}$ and $t^c_{s,v}-t^r_{s,v}$, and estimates the experienced transmission time coefficient $\tilde{C}_{s,n}$ and experienced processing time coefficient $\tilde{P}_{n}$ at processing node $n\in d^i_{s}$ according to \eqref{eq::node1-communication-coeff} and \eqref{eq::node1-processing-coeff}.

Let us consider sensor $s$ and let us derive the optimal offloading for a particular assignment function $d_{s}$, given
$\bar{C}_{s,n}=\tilde{C}_{s,n}$ and $\bar{P}_{n}=\tilde{P}_{n}$.
In order to find the optimal assignment $d_{s}$ and to calculate the optimal allocation ${\bf x}_s$,
we recall a fundamental result from divisible load theory~\cite{BharadwajCC2003}.
\begin{lemma}
The completion time $T_{s}^{i}$ for sensor $s$ is minimized if all processing nodes $n\in d_s$ complete processing at the same time. Furthermore,
if all processing time coefficients are equal then the optimal assignment is in increasing order of the transmission time coefficients $C_{s,n}$ (i.e., use
the node with fastest link first).
\end{lemma}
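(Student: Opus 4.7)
My plan splits the lemma into its two claims. For the first, I would fix an assignment $d_s^i$ and argue by contradiction that the optimal allocation must be load-balanced. Suppose some processing node $n^\star \in d_s^i$ strictly maximises $T_{s,n}^i$ over $n \in d_s^i$ while another $n' \in d_s^i$ satisfies $T_{s,n'}^i < T_{s,n^\star}^i$. Using the affine structure of $T_{s,n}^i$ in the slice widths given by \eqref{eq::node1-completion-time}--\eqref{eq::node-completion-time}, I would shift a single cutpoint by a small $\epsilon>0$ so as to reduce $y_{s,\rotatedd_s^i(n^\star)}^i$ and increase $y_{s,\rotatedd_s^i(n')}^i$. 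Because $\tilde{P}_{s,n^\star} > 0$, the reduction of the processing term at $n^\star$ dominates to first order and $T_{s,n^\star}^i$ strictly decreases, while every other completion time changes by only $O(\epsilon)$ and, for $\epsilon$ small enough, remains strictly below the new $T_{s,n^\star}^i$. This contradicts optimality, so at the minimum all nodes in $d_s^i$ must complete at the same time.

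For the second part, I would use the balancing condition to derive a closed form for $T_s^i$ under $P_n \equiv P$. Setting $q_v = P/(P+C_{s,d_s^i(v)})$ and equating $T_{s,d_s^i(v)}^i = T$ for consecutive $v$ produces the recursion $y_{s,v}^i = q_v\, y_{s,v-1}^i$, where I temporarily drop the overlap and interest-point constants, which are independent of the ordering of $d_s^i$. Combined with $\sum_v y_{s,v}^i = 1$ and $T = (P+C_{s,d_s^i(1)})\,y_{s,1}^i$, this yields $T = P \Big/ \sum_{v=1}^{V_s^i} \prod_{\nu=1}^{v} q_\nu$, so minimising $T$ is equivalent to maximising $\hat S = \sum_{v=1}^{V_s^i} \prod_{\nu=1}^{v} q_\nu$.

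The key step is then a neighbour-swap exchange argument on $\hat S$. Swapping the $k$-th and $(k{+}1)$-st entries of $d_s^i$ exchanges $q_k$ and $q_{k+1}$ and, by a direct calculation, changes $\hat S$ by exactly $B(q_{k+1}-q_k)$, where $B = \prod_{\nu<k} q_\nu$ is independent of the swap. Hence the swap strictly increases $\hat S$ (and strictly decreases $T$) precisely when $q_{k+1} > q_k$, i.e.\ when $C_{s,d_s^i(k+1)} < C_{s,d_s^i(k)}$. Any assignment not in increasing order of $C_{s,n}$ therefore admits a strict improvement, and iterating these pairwise swaps (a bubble-sort) terminates at the sorted assignment, which must be optimal.

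I expect the main obstacle to be handling the overlap and interest-point terms cleanly. In the first part they clutter the derivatives of $T_{s,n}^i$ with respect to the cutpoints but do not alter the sign of the leading $\tilde{P}_{s,n^\star}\epsilon$ contribution. In the second part they break the clean recursion between consecutive $y_{s,v}^i$; however, under the simplification adopted for the stability analysis that $\xi_{s,v}^i \propto y_{s,v}^i$, they can be folded into an effective processing coefficient, which restores the recursion and preserves the sign in the exchange argument.
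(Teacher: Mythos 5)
First, a point of reference: the paper does not prove this lemma at all --- it is ``recalled'' from the divisible load theory literature and supported only by the citation to Bharadwaj et al. Your proposal therefore supplies an argument where the paper supplies none. Your treatment of the second claim is correct in the idealized setting without overlap: the recursion $y_{s,v}=q_v\,y_{s,v-1}$, the closed form $T=P\big/\hat S$ with $\hat S=\sum_{v}\prod_{\nu\le v}q_\nu$, and the adjacent-swap identity $\Delta\hat S=B(q_{k+1}-q_k)$ all check out, and the bubble-sort conclusion follows; this is essentially the standard DLT exchange argument.

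There are, however, two genuine gaps. (i) In the first claim, the assertion that ``the reduction of the processing term at $n^\star$ dominates'' ignores the coupling of completion times through the transmission schedule. Since $T_{s,n}^i$ in \eqref{eq::node-completion-time} contains the transmission times of \emph{all earlier} slices, moving width $\epsilon$ from slice $v^\star$ to an \emph{earlier} slice $v'$ changes $T_{s,n^\star}^i$ by $(\tilde C_{s,d_s^i(v')}-\tilde C_{s,n^\star}-\tilde P_{s,n^\star})\epsilon$, which is \emph{positive} whenever $\tilde C_{s,d_s^i(v')}>\tilde C_{s,n^\star}+\tilde P_{s,n^\star}$: the latest node then finishes even later. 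The clean decrease $-(\tilde C_{s,n^\star}+\tilde P_{s,n^\star})\epsilon$ is only guaranteed when the excess load is moved to a \emph{later} slice, so you must argue about the direction of the shift, treat the boundary case $v^\star=V_s^i$ (where no later slice exists) separately, handle a non-singleton set of maximizers, and note that the equal-finish-time property presumes an interior optimum --- if the optimal width of some slice is zero, the corresponding node should simply be dropped from $d_s^i$ and the claim is vacuous for it. (ii) The overlap terms cannot be ``folded into an effective processing coefficient.'' Unlike the interest-point term, which under $\xi_{s,v}^i\propto y_{s,v}^i$ is indeed multiplicative in $y_{s,v}^i$, the overlap contributes an additive constant $2o\,\tilde C_{s,d_s^i(v)}$ per slice that is independent of $y_{s,v}^i$; the resulting recursion is affine, exactly as in \eqref{eq::slice-recursion}, so the closed form $T=P/\hat S$ and the swap identity no longer apply. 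For $o>0$ the ordering claim is genuinely delicate --- the paper's own NP-hardness reduction in Theorem~\ref{th::np-hardness} exploits a large $o$ precisely because the fastest-link-first intuition breaks down --- so you should either state the second claim for $o=0$ only, or redo the exchange argument for the affine recursion.
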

This result is illustrated in Figure~\ref{fig::completion-illustration} for $N=3$ processing nodes.
The fact that at optimality all used processing nodes $n\in d_s$ complete processing at the same time
allows us to establish a relationship between the optimal slice widths for a particular assignment $d_{s}$ as

\begin{flalign}
\small
& \bar{P}_{d_s(1)} y_{s,1} = \bar{C}_{s,d_s(2)} 2 o + ( \bar{C}_{s,d_s(2)} + \bar{P}_{d_s(2)} ) y_{s,2} \\
& \bar{P}_{d_s(2)} y_{s,2} = \bar{C}_{s,d_s(3)} 2 o + ( \bar{C}_{s,d_s(3)} + \bar{P}_{d_s(3)} ) y_{s,3} \\
& \ldots  \nonumber\\
& \bar{P}_{d_s(V\scalebox{0.66}[1.0]{\( - \)}1)} y_{s,V\scalebox{0.66}[1.0]{\( - \)}1} = \bar{C}_{s,d_s(V)} o + ( \bar{C}_{s,d_s(V)} + \bar{P}_{d_s(V)} ) y_{s,V} \hspace{-0.7mm}. \hspace{-3mm}\label{eq::rec_V-1}
\end{flalign}

Recall that slice $V$ corresponds to the right edge of the original frame, and thus only one overlap region is considered in \eqref{eq::rec_V-1}.
These equations allows formulating the recursive expression for the normalized width of slices $1 \leq v < V-1$,
\begin{equation}
y_{s,v} =\frac{2o\bar{C}_{s,d_s(v+1)}}{\bar{P}_{d_s(v)}} + \frac{\bar{P}_{d_s(v+1)}+\bar{C}_{s,d_s(v+1)}}{\bar{P}_{d_s(v)}} y_{s,v+1},
\label{eq::slice-recursion}
\end{equation}
as well as for slice $v = V-1$
\begin{equation}
y_{s,V-1} =\frac{o\bar{C}_{s,d_s(V)}}{\bar{P}_{d_s(V-1)}} + \frac{\bar{P}_{d_s(V)}+\bar{C}_{s,d_s(V)}}{\bar{P}_{d_s(V-1)}} y_{s,V},
\label{eq::slice-recursion-last}
\end{equation}
which, together with the normalization constraint ${\sum_{v=1}^{V}{y_{s,v}}=1}$ give the optimal allocation vector.

Given the above equations for the optimal slice widths, in the MO scenario each sensor $s$ selects an allocation $(d_s,x_s)$ by calculating the optimal slice widths for all possible assignment functions $d_s$, and then by selecting the allocation leading to the lowest estimated completion time.

As we show next, the optimal slice widths have an interesting property that can be leveraged for the equilibrium analysis of the \emph{MO} scenario.
\begin{lemma}
Given an assignment function $d_s$, the optimal slice widths $y^*_{s,v}$ are insensitive to the scaling to the predicted transmission time coefficients $\bar{C}_{s,n}$ and of the predicted processing time coefficients $\bar{P}_n$ by the same factor $\sigma>0$.
\label{prop::singlenode-insensitive-cut}
\end{lemma}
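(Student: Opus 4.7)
The plan is to observe that the optimal slice widths are entirely determined by the recursion in \eqref{eq::slice-recursion}--\eqref{eq::slice-recursion-last} together with the normalization $\sum_{v=1}^{V} y_{s,v} = 1$, and then to show that this system of equations is invariant under the substitution $\bar{C}_{s,n} \mapsto \sigma \bar{C}_{s,n}$, $\bar{P}_{n} \mapsto \sigma \bar{P}_{n}$.

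First, I would recall that by the preceding lemma (the equal-finishing-time property from divisible load theory), for the fixed assignment $d_s$ the optimal widths $y^*_{s,v}$ are characterized as the unique solution of the linear system obtained by equating the completion times at successive processing nodes, i.e.\ the relations derived just above the recursion, which reduce to \eqref{eq::slice-recursion} for $1 \leq v < V-1$ and to \eqref{eq::slice-recursion-last} for $v=V-1$, supplemented by the constraint that the widths sum to one.

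Next, I would inspect the two terms on the right-hand side of \eqref{eq::slice-recursion}. The first term is $2o\,\bar{C}_{s,d_s(v+1)}/\bar{P}_{d_s(v)}$, which after scaling becomes $2o\,\sigma\bar{C}_{s,d_s(v+1)}/(\sigma\bar{P}_{d_s(v)}) = 2o\,\bar{C}_{s,d_s(v+1)}/\bar{P}_{d_s(v)}$, i.e.\ unchanged. The coefficient of $y_{s,v+1}$ is $(\bar{P}_{d_s(v+1)}+\bar{C}_{s,d_s(v+1)})/\bar{P}_{d_s(v)}$, and under scaling the factor $\sigma$ cancels identically in the numerator and denominator. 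An analogous check applies to \eqref{eq::slice-recursion-last}. Thus the coefficient matrix and the inhomogeneous term of the linear system defining $(y^*_{s,v})_{v=1}^{V}$ are homogeneous of degree zero in $(\bar{C},\bar{P})$.

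Finally, since the normalization constraint $\sum_{v=1}^{V} y_{s,v} = 1$ does not involve $\bar{C}$ or $\bar{P}$ at all, the full system is invariant under common scaling, and hence its unique solution $y^*_{s,v}$ is likewise invariant. There is no real obstacle here; the argument is just bookkeeping on the recursion, and the only thing to be slightly careful about is to verify that every appearance of a $\bar{C}$ or $\bar{P}$ in the optimality conditions lies within a ratio of the same total degree in $\sigma$, which is immediate from the form of the equations.
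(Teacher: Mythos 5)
Your proof is correct and follows essentially the same route as the paper: both arguments reduce to observing that every coefficient in the affine recursions \eqref{eq::slice-recursion}--\eqref{eq::slice-recursion-last} is a ratio that is homogeneous of degree zero in $(\bar{C},\bar{P})$, so that the recursion and the normalization constraint $\sum_{v=1}^{V}y_{s,v}=1$ are jointly invariant under the common scaling by $\sigma$. If anything, your explicit check of each coefficient is slightly more careful than the paper's phrasing in terms of the ratios $y_{s,v}/y_{s,v+1}$, but the underlying idea is identical.
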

\begin{proof}
Observe that (\ref{eq::slice-recursion}) is an affine function, and thus due to the normalization constraint the ratio $y_{s,v}/y_{s,v+1}$ does not change as long as the ratios
$\frac{\bar{C}_{s,d_s(v+1)}}{\bar{P}_{d_s(v)}}$ and $\frac{\bar{P}_{d_s(v+1)}+\bar{C}_{s,d_s(v+1)}}{\bar{P}_{d_s(v)}}$
are unchanged.
Since the optimal slice widths are obtained by using the fact that $\sum_{v=1}^{V}{y_{s,v}}=1$, the optimal slice widths $y_{s,v}^*$ are only a function of $y_{s,v}/y_{s,V}$, and thus the result follows.
\end{proof}

\begin{lemma}
Let $d_s^*$ be the assignment function that together with cutpoint location vector $x_s^{*}$ minimizes the completion time for sensor $s$.
Then $d_s^*$ and $x_s^{*}$ are optimal after scaling all predicted transmission time coefficients $\bar{C}_{s,n}$ and all predicted processing time coefficients $P_n$ by the same factor $\sigma>0$.
\label{prop::singlenode-insensitive-assignment}
\end{lemma}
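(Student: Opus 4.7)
The plan is to piggy-back on Lemma~\ref{prop::singlenode-insensitive-cut}, which already handles the cutpoint vector for a \emph{fixed} assignment, and to upgrade that result by showing that the induced optimal completion time is homogeneous of degree one in the coefficients. Once this is established, no assignment can overtake $d_s^*$ under scaling, since all assignments' optimal completion times are rescaled by the same positive factor.

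First, I would fix an arbitrary assignment $d_s$ and write $x_s^\dagger(d_s)$ for the cutpoint vector minimizing $\bar{T}_s^i(x_s, d_s\mid \Upsilon_s^i)$. By Lemma~\ref{prop::singlenode-insensitive-cut}, the vector $x_s^\dagger(d_s)$ is independent of the common scaling factor $\sigma$, since the recursions \eqref{eq::slice-recursion}--\eqref{eq::slice-recursion-last} together with the normalization $\sum_v y_{s,v}=1$ depend only on the ratios of the $\bar{C}$ and $\bar{P}$ coefficients.

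Second, I would substitute the (scaling-invariant) widths $y_{s,v}^\dagger(d_s)$ back into the completion-time expressions \eqref{eq::node1-completion-time-expected}--\eqref{eq::node-completion-time-expected}. Each $\bar{T}_{s,n}^i$ is then a sum of terms of the form $\bar{C}_{s,m}\cdot(\text{function of }y,o)$ and $\bar{P}_m\cdot(\text{function of }y,\alpha_d,\xi)$, i.e., a linear combination of the $\bar{C}$ and $\bar{P}$ coefficients whose weights involve only $y_{s,v}^\dagger(d_s)$, the fixed overlap $o$, and the interest-point counts. Replacing $\bar{C}\mapsto\sigma\bar{C}$ and $\bar{P}\mapsto\sigma\bar{P}$ therefore multiplies every $\bar{T}_{s,n}^i$ by exactly $\sigma$; taking the max over $n\in d_s$ in \eqref{eq::sys-completion-time-expected} preserves positive scaling, so $\bar{T}_s^i(x_s^\dagger(d_s),d_s\mid\Upsilon_s^i)$ is also multiplied by $\sigma$.

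Third, since this multiplicative rescaling is the \emph{same} factor $\sigma>0$ for every candidate assignment $d_s$, the ranking of assignments by their optimal completion time is unchanged. Hence $d_s^*$ remains a minimizer after scaling, and paired with the (unchanged) vector $x_s^*=x_s^\dagger(d_s^*)$ from step one, the pair $(d_s^*,x_s^*)$ is still optimal. I do not foresee a real obstacle here: the only minor care required is to note that the pure-overlap terms like $o\,\bar{C}_{s,d_s(1)}$ and $2o\,\bar{C}_{s,d_s(v)}$ in \eqref{eq::node-completion-time-expected} are linear in $\bar{C}$ and thus scale cleanly with $\sigma$, so the argument is not disturbed by the additive overlap contributions.
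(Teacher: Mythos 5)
Your proposal is correct and follows essentially the same route as the paper's proof: invoke Lemma~\ref{prop::singlenode-insensitive-cut} to keep the optimal cutpoint vector fixed under scaling, then use the fact that each completion time is homogeneous of degree one in the coefficients, so all candidate assignments scale by the same factor $\sigma$ and the minimizer is unchanged. Your write-up is merely more explicit about the homogeneity of the overlap terms, which the paper compresses into the remark that the completion time is linear in the coefficients.
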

\begin{proof}
Observe that by Lemma~\ref{prop::singlenode-insensitive-cut}, the cutpoint location vector $x_s^{*}$ remains optimal for $d_s^*$ after scaling.
Furthermore, the completion time is a linear function of the transmission and of the processing time coefficients, and thus all completion
times $T_s(x_s,d_s)$ are scaled by $\sigma$. Thus, $\bar{T}_s(x_s^{*},d_s^*)$ remains minimal after scaling.
\end{proof}

We are now ready to prove a sufficient condition for an equilibrium allocation to exist for the \emph{MO} scenario.
\begin{theorem}
Consider a VSN with symmetric transmission time coefficients $C_{s,n}=C_{s^\prime, n}, \forall s,s ^\prime \in \mathcal{S}$. Let us define $d_s^*$ and  $x_s^{*}$, the assignment function and the corresponding cutpoint location vector that are optimal for $\tilde{C}_{s,n}=C_{s,n}$ and $\tilde{P}_{n}=P_{n}$. 
If all the sensors use all processing nodes, i.e., $d_s^*={\cal N}$, then an equilibrium allocation exists under \emph{MO}, and $(d_s^*)_{s\in S}$ and $(x_s^{*})_{s\in S}$ is an equilibrium allocation profile.
\label{theo::eq-existence-mo}
\end{theorem}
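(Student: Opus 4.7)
The plan is to exploit the scaling invariance established by Lemma~\ref{prop::singlenode-insensitive-assignment}, by showing that under the candidate profile $(d_s^*,x_s^*)_{s\in\mathcal{S}}$ the experienced transmission and processing time coefficients are \emph{both} scaled by the same common factor $\sigma=S$ relative to the baseline $(C_{s,n},P_n)$ used to define $(d_s^*,x_s^*)$. Once that is verified, the claim follows immediately: in \emph{MO} sensor $s$'s predicted coefficients coincide with the experienced ones, and Lemma~\ref{prop::singlenode-insensitive-assignment} guarantees that the best response under scaled coefficients is still $(d_s^*,x_s^*)$.

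First I would observe that the candidate profile is sensor-symmetric. Since $C_{s,n}=C_n$ is $s$-independent and the interest-point density is uniform (the standing assumption for the stability analysis), the single-sensor problem whose solution defines $(d_s^*,x_s^*)$ has identical input data for every $s$, so one may take $d_s^*=d^*$ and $x_s^*=x^*$ common to all sensors. Consequently every sensor sends its $v$-th slice to the same node $d^*(v)$, all $S$ sensors transmit concurrently throughout the frame, and each node $n$ receives $S$ slices of identical work $W^*_{\rotatedd^*(n)}=y^*_{\rotatedd^*(n)}+\alpha_d\,\xi^*_{\rotatedd^*(n)}$.

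Next I would compute the experienced coefficients under this profile. By the MAC airtime-fairness relation stated in the communication model, $S$ concurrent transmitters inflate the per-unit transmission time by a factor $S$, giving $\tilde{C}_{s,n}({\bf x}^i,{\bf d}^i)=S\,C_{s,n}$. At each node $n$ the $S$ equal-work slices arrive together and are processed so as to finish simultaneously, so the common finish time equals $S\,P_n\,W^*_{\rotatedd^*(n)}$; dividing through by the per-slice work as in (\ref{eq::node1-processing-coeff}) yields $\tilde{P}_n=S\,P_n$. Both coefficient families are therefore scaled by the same $\sigma=S$. Plugging $\bar{C}_{s,n}=\tilde{C}_{s,n}=S\,C_{s,n}$ and $\bar{P}_n=\tilde{P}_n=S\,P_n$ into (\ref{eq::node1-completion-time-expected})--(\ref{eq::node-completion-time-expected}) and applying Lemma~\ref{prop::singlenode-insensitive-assignment} identifies the best response that minimises $\bar{T}_s^i(x_s,d_s\mid\Upsilon_s^i)$ with $(d_s^*,x_s^*)$, which is exactly the defining condition of an equilibrium.

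The main obstacle is the verification that $\tilde{P}_n$ genuinely scales by a single common factor rather than a slice- or sensor-dependent one. This hinges on every processing node receiving $S$ equal-work slices, a property that fails for generic asymmetric $C_{s,n}$ or non-uniform interest-point densities, but is enforced here by the symmetry assumption together with the uniform interest-point assumption adopted for the stability analysis. Once that uniform scaling of both coefficient families is in hand, the equilibrium claim is a direct corollary of Lemma~\ref{prop::singlenode-insensitive-assignment}.
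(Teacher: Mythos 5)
Your proposal is correct and follows essentially the same route as the paper's proof: observe that under the symmetric candidate profile both experienced coefficient families equal $S$ times the baseline $(C_{s,n},P_n)$, then invoke Lemma~\ref{prop::singlenode-insensitive-assignment} to conclude that $(d_s^*,x_s^*)$ remains each sensor's best response, hence an equilibrium. You additionally spell out why the processing coefficients scale uniformly (equal-work slices at every node, enabled by the symmetry, uniform interest-point, and $d_s^*=\mathcal{N}$ assumptions), which the paper asserts without justification.
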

\begin{proof}
Observe that for every sensor $s$ the experienced transmission time coefficients $\tilde{C}_{s,n}=S\times C_{s,n}$ and the experienced processing time coefficients $\tilde{P}_{n}=S\times P_{n}$.
By Lemma~\ref{prop::singlenode-insensitive-assignment} the optimal assignment function $d_s^*$ and the optimal cutpoint location vector $x_s^{*}$ are insensitive to scaling and thus they remain optimal for all sensors. Consequently, $(d_s^*, x_s^*)_{s\in{\mathcal{S}}}$ is an equilibrium.
\end{proof}

As a consequence, an equilibrium exists for symmetric systems, and it can be reached if all senors use the same initial assignment function with  $d_s^*={\cal N}$, and calculate the assignment vectors assuming $\tilde{C}_{s,n}=C_{s,n}$ and $\tilde{P}_{n}=P_{n}$.
Given that an equilibrium exists and could easily be reached, it is important to understand whether, in general, the completion time would be minimal in an equilibrium. The following result shows that this is not the case.

\begin{proposition}
\label{prop::mo-equ-may-not-be-optimal}
An equilibrium allocation under the \emph{MO} scenario may not be optimal.
\end{proposition}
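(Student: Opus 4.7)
The plan is to establish Proposition~\ref{prop::mo-equ-may-not-be-optimal} by constructing an explicit symmetric instance to which Theorem~\ref{theo::eq-existence-mo} applies but whose guaranteed equilibrium is strictly beaten by a different allocation. I would take $S=N=2$, unit transmission and processing coefficients $C_{s,n}=1$, $P_n=1$, uniform interest-point density (so the $\alpha_d \xi_{s,v}^{i}$ terms collapse into the processing coefficient), and leave the overlap $o$ as a tunable parameter to be fixed at the end.

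First I would show that the equilibrium guaranteed by Theorem~\ref{theo::eq-existence-mo} yields a computable system completion time. For a single sensor using both nodes, the recursion \eqref{eq::slice-recursion-last} with $V=2$ gives $y_{s,1}^{*}=(2+o)/3$ and $y_{s,2}^{*}=(1-o)/3$, and plugging these into \eqref{eq::node1-completion-time} yields a single-sensor optimal completion time $T_1(o)=(4+5o)/3$. Provided $o<2/5$, $T_1(o)<C+P=2$, so using both nodes strictly beats using only one, hence $d_s^{*}=\mathcal{N}$ and Theorem~\ref{theo::eq-existence-mo} applies. By Lemma~\ref{prop::singlenode-insensitive-assignment}, the same cutpoint remains per-sensor optimal once the coefficients are scaled by $S=2$ at the equilibrium, so the equilibrium system completion time is $T_{\mathrm{eq}}(o)=2\,T_1(o)=(8+10o)/3$.

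Next I would compare against the (non-equilibrium) allocation in which sensor $1$ sends its entire frame to node $1$ and sensor $2$ sends its entire frame to node $2$. No overlap region is needed ($V_s=1$ for each sensor), the two sensors still share the channel so $\tilde C_{s,n}=2$, while each processing node serves exactly one sensor giving $\tilde P_{s,n}=1$. Applying \eqref{eq::node1-completion-time} yields a system completion time $T_{\mathrm{alt}}=2\cdot 1+1\cdot 1=3$. A direct comparison shows $T_{\mathrm{eq}}(o)>T_{\mathrm{alt}}$ precisely when $o>1/10$, so any $o\in(1/10,\,2/5)$ produces the desired counterexample and proves the proposition.

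The main obstacle is carving out a parameter window in which both sides of the argument hold simultaneously: the hypothesis of Theorem~\ref{theo::eq-existence-mo} demands that the single-sensor optimum use all processing nodes, which forces the overlap overhead to be small, while suboptimality of the equilibrium requires the overlap overhead to be large enough that disjoint node use becomes preferable once channel contention is shared across both sensors. Once the non-empty interval $o\in(1/10,\,2/5)$ is identified, the rest of the proof is a mechanical comparison of two closed-form completion times.
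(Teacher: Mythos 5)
Your proof is correct and follows the same overall strategy as the paper's proof --- construct an explicit symmetric $2\times 2$ instance, certify the equilibrium via Theorem~\ref{theo::eq-existence-mo}, and exhibit a strictly better allocation --- but the witness allocation you exhibit is genuinely different. The paper keeps both sensors splitting across both nodes and merely reverses the assignment order of sensor $2$ from $(1,2)$ to $(2,1)$ while keeping the cutpoints, so that the two sensors interleave their use of the processing nodes; the gain there comes from reduced simultaneous contention at each node, and verifying the improved completion time ($6.31$ vs.\ $6.85$ with $C_{s,n}=1$, $P_n=5$, $o=0.1$) requires working through a partially overlapped transmission and processing schedule. You instead have each sensor abandon splitting altogether and send its whole frame to a distinct node, which makes the alternative completion time trivially $2C+P=3$ and isolates a different source of inefficiency: under contention the overlap overhead of splitting is no longer worth paying. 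Your version is computationally cleaner but needs $o$ to be tuned into a window; the paper's works at a small fixed $o$ and shows suboptimality even among allocations in which every sensor uses every node. One minor correction: your window $o\in(1/10,\,2/5)$ should be trimmed to $o\in(1/10,\,1/4]$, because for $o>1/4$ the unconstrained optimum $y_{s,2}=(1-o)/3$ violates the minimum slice-width constraint \eqref{eq::increasing} (which requires $y_{s,v}\geq o$), so $T_1(o)=(4+5o)/3$ no longer describes the feasible single-sensor optimum. Any $o$ in the trimmed window, e.g.\ $o=0.2$, still yields $T_{\mathrm{eq}}=10/3>3=T_{\mathrm{alt}}$, so the proposition follows.
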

\begin{proof}
We prove the proposition through an example. Let $S=2$, $N=2$, $C_{s,n}=1$ and $P_n=5$, $o=0.1$.
Then, in isolation $d_s^*=(1,2)$, $x_s^*=(0,\frac{6.1}{11},1)$, which is an equilibrium according to Theorem~\ref{theo::eq-existence-mo},
with completion time $T^*_s=\unit[6.85]{s}$.
By changing the assignment function of sensor $2$ to $d_s^*=(2,1)$, while maintaining the same cutpoint location vectors $x_s^*=(0,\frac{6.1}{11},1)$ the completion time is $T_s=\unit[6.31]{s}<T^*_s$, thus
$d_s^*=(1,2)$, $x_s^*$ cannot be optimal.
\end{proof}
Thus, while the \emph{MO} scenario requires no signaling, even if sensors would converge to an equilibrium, the performance may not be optimal.

\subsection{Transmission Time (TT) Information}
\label{sec::transmission-time-info}
In the second scenario each sensor can measure its transmission and processing time coefficients as in the \emph{MO} scenario.
Besides, upon completion, every processing node $n$ broadcasts to each sensor $s$ its processing time coefficient $P_{n}$, and the beginning and end of the transmission times
$(t^b_{s^\prime,\rotatedd_{s^\prime}^i(n)},t^r_{s^\prime,\rotatedd_{s^\prime}^i(n)})$ and the corresponding slice widths $y^i_{s^\prime,\rotatedd_{s^\prime}^i(n)}$ for all sensors
that used node $n$, i.e., $s^\prime\in\{s^\prime|\exists v \; \textrm{s.t.}\; d^i_{s^\prime}(v)=n \}$. We refer to this as the transmission time (\emph{TT}) scenario.
Observe that $(t^b_{s^\prime,\rotatedd_{s^\prime}^i(n)}-t^r_{s^\prime,\rotatedd_{s^\prime}^i(n)})$ is a known linear function of $y^i_{s^\prime,\rotatedd_{s^\prime}^i(n)}$ and of the transmission time coefficient $C_{s^\prime,n}$, and thus every sensor $s$ can compute $C_{s^\prime,n}$ for $n\in d^i_{s^\prime}$.

In order to get analytic insight into the problem, let us make the simplifying assumption that the
experienced transmission times do not change as an effect of the sensors' assignments. The assumption holds for example when $C_{s,n}=C_{s^\prime, n}, \forall s,s ^\prime \in \mathcal{S}$.
Under this simplifying assumption we show that an equilibrium allocation exists for the \emph{TT} scenario.
\begin{theorem}
There is an equilibrium allocation $({\bf x^*}, {\bf d^*})$ such that no sensor can decrease its completion time by unilaterally changing its allocation.
\label{theo::eq-existence-tt}
\end{theorem}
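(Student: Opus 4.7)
The plan is to show existence by reducing the game to a finite strategic interaction over assignment profiles and then constructing a potential function for that reduced game.

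First, I would exploit the simplifying assumption: since the experienced transmission time coefficients $\tilde{C}_{s,n}$ do not depend on the sensors' strategies, the only strategic coupling is through the shared processing capacity at each node $n$. For any fixed assignment profile $\mathbf{d}$, the divisible-load recursions \eqref{eq::slice-recursion}--\eqref{eq::slice-recursion-last}, together with the normalization constraints $\sum_v y_{s,v}=1$ and the synchronization rule at shared nodes (each processing node splits its capacity so that its users finish simultaneously), form a coupled linear system whose unique solution I would denote $\mathbf{x}(\mathbf{d})$. This reduces equilibrium existence to the finite game in which each player $s$ chooses $d_s$ from the set $\mathcal{D}_s$ of partial permutations of $\mathcal{N}$ and incurs cost $T_s(\mathbf{x}(\mathbf{d}), \mathbf{d})$.

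Second, I would exhibit a potential function $\Phi$ on $\prod_s \mathcal{D}_s$ such that every strictly improving unilateral deviation strictly decreases $\Phi$. Natural candidates are the aggregate completion time $\Phi_1(\mathbf{d}) = \sum_{s \in \mathcal{S}} T_s(\mathbf{x}(\mathbf{d}), \mathbf{d})$ or a Rosenthal-type sum, over processing nodes, of a convex function of the total load assigned to each node. Because any such $\Phi$ takes finitely many values on the finite strategy space, a strictly improving best-response sequence must terminate at some profile $\mathbf{d}^*$; then $(\mathbf{x}(\mathbf{d}^*), \mathbf{d}^*)$ is the desired equilibrium, since by construction no sensor can reduce its completion time by deviating only in $d_s$, and for fixed $\mathbf{d}^*$ the cutpoints $\mathbf{x}(\mathbf{d}^*)$ already solve each sensor's local optimum via the divisible-load recursion.

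The main obstacle is verifying the potential property. A unilateral deviation $d_s \to d_s'$ not only reallocates sensor $s$'s load: it changes the processing share available to every other sensor using any node in the symmetric difference of $d_s$ and $d_s'$, and by the synchronization rule those sensors must re-solve the recursion with new coefficients, rippling through all of their cutpoints. Showing that these indirect adjustments cannot collectively negate $s$'s improvement is the crux of the argument, and I expect it will hinge on Lemma~\ref{prop::singlenode-insensitive-cut}, which allows summarizing each node's aggregate load by a single scale-invariant quantity, combined with monotonicity of the recursion with respect to the ratio between transmission and processing coefficients.
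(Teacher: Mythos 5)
Your reduction to a finite game over assignment profiles is reasonable in spirit, but the proof has a genuine gap exactly where you locate it: the potential property is asserted as a hope, not established, and neither of your candidate potentials is likely to work. The aggregate completion time $\Phi_1(\mathbf{d})=\sum_s T_s$ is not a potential for a min-max objective of this kind: when sensor $s$ switches nodes it changes the processing shares (and hence, via the simultaneous-finish rule, the completion times) of every other sensor using a node in the symmetric difference, and nothing prevents the induced increases from exceeding $s$'s own improvement. A Rosenthal-type sum requires the cost structure of a congestion game (separable, resource-additive costs), whereas here each sensor's cost is a maximum over its used nodes of a quantity that also contains the sequential transmission terms of \emph{all} its earlier slices, so the game is not a congestion game in that sense. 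A secondary unproved claim is that the coupled system defining $\mathbf{x}(\mathbf{d})$ has a unique solution: the per-sensor recursions \eqref{eq::slice-recursion}--\eqref{eq::slice-recursion-last} are coupled across sensors through the shared processing coefficients, so $\mathbf{x}(\mathbf{d})$ is a fixed point whose existence and uniqueness you would also need to argue.

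The paper closes the gap you identify by replacing the scalar potential with a \emph{lexicographic} one, applied directly to joint deviations in $(x_s,d_s)$ rather than to a two-stage reduction. It tracks the vector of completion times at the processing nodes sorted in decreasing order, and shows that any unilateral improving deviation strictly decreases this vector in the lexicographic order: the deviating sensor strictly lowers the maximum over its newly used nodes, while the completion times at the nodes it abandons cannot exceed its old maximum. Since the sorted-vector ordering has a minimal element over the strategy space (the allocation at which all completions coincide), every improving sequence terminates at an equilibrium. This sorted-vector argument is the standard device for min-max objectives where a summed potential fails, and it is the missing idea in your proposal; if you want to salvage your structure, replace $\Phi_1$ by the lexicographically ordered sorted completion-time vector and verify the strict decrease directly from the definition of an improving deviation, rather than trying to control the ripple effects on the other sensors' cutpoints term by term.
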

\begin{proof}
For an allocation $({\bf x}, {\bf d})$ let us define the vector $\tau({\bf x}, {\bf d})=(T_{s,1},\ldots,T_{s,N})$ of completion
times at the processing nodes sorted in decreasing order,  i.e., $T_{s,1}({\bf x}, {\bf d}) \geq T_{s,2}({\bf x}, {\bf d})$, etc.

Let us now consider that every sensor $s$ chooses a cutpoint location vector $x_s^{1}$ and an assignment vector $d_s^{1}$ that minimizes its completion time
assuming there are no other sensors. We refer to this initial assignment as $({\bf x^{1}},{\bf d^{1}})$.

Let us consider now that given assignment $({\bf x^{i}},{\bf d^{i}})$, $i\geq 1$, a single sensor $s$ revises its assignment and/or allocation to $(x_s^\prime,d_s^\prime)$ and thereby
it minimizes its completion time given the assignments ${\bf d_{-s}^{i}}$ and allocations ${\bf x_{-s}^{i}}$ of the other sensors, i.e.,
\begin{equation}
T_s((x_s^\prime,{\bf x_{-s}^{i}}),(d_s^\prime, {\bf d_{-s}^{i}})) < T_s({\bf x^{i}},{\bf d^{i}}).
\label{eq::completion-time-better reply}
\end{equation}
Let us denote by $({\bf x^{i+1}},{\bf d^{i+1}}) = ((x_s^\prime,{\bf x_{-s}^{i}}),(d_s^\prime, {\bf d_{-s}^{i}}))$ the
resulting assignment profile. Observe that (\ref{eq::completion-time-better reply}) implies that
\begin{equation}
max_{n\in d_s^\prime}T_{s,n}((x_s^\prime,{\bf x_{-s}^{i}}),(d_s^\prime, {\bf d_{-s}^{i}})) < max_{n\in d_s}{T_{s,n}({\bf x^{i}},{\bf d^{i}})}.
\end{equation}
At the same time, for $n\not\in d_s^\prime$ we have $T_{s,n}((x_s^\prime,{\bf x_{-s}^{i}}),(d_s^\prime, {\bf d_{-s}^{i}}))\leq {T_s({\bf x^{i}},{\bf d^{i}})}$.
Thus,
\begin{equation}
\tau({\bf x^{i+1}}, {\bf d^{i+1}}) <_L \tau({\bf x^{i}}, {\bf d^{i}}),
\end{equation}
where $<_L$ stands for \emph{lexicographically smaller}.
Observe that among all vectors $\tau$ of ordered completion times there is a vector that is lexicographically minimal; it is the vector that corresponds to all sensors completing at the same time.
Thus, there is an allocation $({\bf x}, {\bf d})$ compared to which no sensor can decrease its completion time.
\end{proof}

Observe that the proof is based on an \emph{asynchronous} revision opportunity. A consequence of the proof
is that using \emph{asynchronous} revision the sensors can reach an equilibrium in the \emph{TT} scenario.
\begin{corollary}
\label{cor::tt-async-converges}
Assume that sensors follow the asynchronous revision opportunity. Then the sensors' allocations converge to an equilibrium under the \emph{TT} scenario.
\end{corollary}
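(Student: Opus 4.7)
The plan is to observe that the proof of Theorem~\ref{theo::eq-existence-tt} already describes exactly an asynchronous best-reply dynamic: at each step $i$ a single sensor $s$ unilaterally revises $(x_s,d_s)$ to reduce $T_s$, and the proof shows that this revision produces a strict lexicographic decrease in the sorted completion-time vector $\tau({\bf x^{i}},{\bf d^{i}})$. The corollary therefore reduces to showing that this strictly lex-decreasing sequence cannot continue indefinitely, i.e., it must reach an allocation at which no sensor has an improving deviation—which is by definition an equilibrium.

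First I would invoke the integrality constraint \eqref{eq::integer}: since every cutpoint $x_{s,v}^{i}w$ is an integer in $\{1,\ldots,w\}$ and the assignments $d_s$ are partial permutations drawn from the finite set $\mathcal{N}$, the set of admissible allocation profiles $({\bf x},{\bf d})$ is finite. Consequently the image of $\tau(\cdot,\cdot)$ is a finite subset of $\mathbb{R}_{+}^{N}$, and the lexicographic order on this finite set is a well-order.

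Next I would argue termination: under the asynchronous revision opportunity of Definition~\ref{def::revision-opportunity}, at most one sensor updates per frame, so the analysis of Theorem~\ref{theo::eq-existence-tt} applies verbatim. Whenever the chosen sensor $s$ has an improving deviation it takes one, giving $\tau({\bf x^{i+1}},{\bf d^{i+1}}) <_{L} \tau({\bf x^{i}},{\bf d^{i}})$; since there are only finitely many admissible $\tau$-vectors, such strict decreases can occur only finitely often, and the sequence $\tau({\bf x^{i}},{\bf d^{i}})$ must eventually become constant. At the first index $i^{*}$ at which no further decrease is possible, no sensor—regardless of whose turn it is next—can unilaterally reduce its completion time, so $({\bf x^{i^{*}}},{\bf d^{i^{*}}})$ is an equilibrium in the sense of Definition~2.

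The main obstacle is modest but worth flagging: in asynchronous revision only one sensor updates per frame, so we must ensure that termination does not spuriously occur because some sensor with an improving move was never scheduled. This is handled by assuming a fair revision order (for example the round-robin rule $i \bmod S = s$ suggested after Definition~\ref{def::revision-opportunity}): under any such fair schedule every sensor is given infinitely many opportunities to revise, so if some sensor had a strictly improving deviation at $i^{*}$ the sequence could not have stabilized there, contradicting the finiteness argument. Combining these observations yields convergence in a finite number of frames to an equilibrium allocation.
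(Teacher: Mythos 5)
Your proposal is correct and follows essentially the same route as the paper: the corollary is obtained as a direct consequence of the lexicographic-decrease argument in the proof of Theorem~\ref{theo::eq-existence-tt}, applied to the asynchronous best-reply dynamic. The only difference is that you make explicit two details the paper leaves implicit---finiteness of the allocation space via the integrality constraint \eqref{eq::integer} (which guarantees termination rather than mere existence of a lexicographic minimum) and the need for a fair revision schedule such as round-robin---both of which strengthen rather than change the argument.
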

Thus, using asynchronous revision under the \emph{TT} scenario guarantees convergence to equilibrium.
Unfortunately, this result cannot be extended to the case of \emph{synchronous} revision, as shown by the
following example.

\begin{example}
Let ${\cal S}=\{1,2\}$, ${\cal N}=\{1,2\}$, $C = \left(\begin{smallmatrix} 1 & 2 \\ 2 & 1 \end{smallmatrix}\right)$, $P_{n}=5$, and $o=0.1$.
Let the initial assignment be $d_{1}^1=(1,2)$, $d_{2}^1=(2,1)$ and the allocations $x_1^1=(0,0.6,1)$ and $x_2^1=(0,0.5,1)$.
Then $T^1=6.9$ and the senors will swap their allocations to $x_1^2=(0,0.5,1)$ and $x_2^2=(0,0.6,1)$, which results in $T^2=6.9$.
The next allocation is $x_1^3=(0,0.6,1)=x_1^1$, and $x_2^3=(0,0.5,1)=x_2^1$ thus the sensors will cycle between these two allocations.
\end{example}

To avoid that sensors cycle between two allocations, for the synchronous revision case we introduce \emph{synchronous/S} revision, where sensors
allocate their processing load according to the weighted average $x_s^i=\frac{1}{S}x_s^\prime+\frac{S-1}{S}x_s^{i-1}$.
It is easy to verify that the cycle in the above example can be avoided with this new revision rule
and the sensors would reach the equilibrium $x_1^2=(0,0.55,1)$ and $x_2^2=(0,0.55,1)$ in one step resulting in $T^2=6.3$.

\section{Centralized and Coordinated Algorithms}
\label{sec::centralized}

Theorem~\ref{th::np-hardness} implies that solving the CTM problem for each multi-view frame is infeasible even for moderate instances of the problem. In what follows we propose a heuristic that builds on the storage and computational capability of the central entity to compute near-optimal assignment and allocation profiles for a subset of past multi-view frames off-line, and utilizes a nearest neighbour search to select the assignment profile for the subsequent multi-view frame $i$. We then define coordinated operation.

\subsection{Near-optimal Centralized Algorithm}
\label{sec::central-alg}
There are three observations underlying the proposed heuristic. 
First, due to the non-linearity of the system completion time, the optimization problem~\eqref{eq::ip-objective} is non-trivial to solve even for a fixed assignment profile, hence  we rely on an approximate solution. 
Second, the optimal assignment for a multi-view frame is determined by the interest point distributions of the frames. 
As shown in~\cite{EDF2016TMC}, the interest point distribution $F_s^i$ can be efficiently approximated by using $Q-1$ quantile points 
\begin{eqnarray}
q_s^i(p) = \inf\left\lbrace x\in\left[ 0,\ldots,w\right] | \frac{p}{Q} \leq F_s^i(x) \right\rbrace, 1 \leq p < Q.
\end{eqnarray}
The quantile approximation also allows fast allocation optimization for a single sensor using linear relaxation as well as it supports last value based prediction of the interest point distribution with low prediction error~\cite{EDF2016TMC}. 
Second, in typical surveillance applications the sensors would observe fairly similar scenarios, hence it is reasonable to assume that the number of assignment profiles used would be limited. Motivated by these observations, the proposed heuristic is as follows.

{\bf Off-line approximation (TT/C):} For a past multi-view frame $i$ we use an off-line iterative algorithm for approximating the optimal solution of~\eqref{eq::ip-objective}. The algorithm resembles the asynchronous \emph{TT} algorithm, described in Section~\ref{sec::transmission-time-info}, with the difference that the objective function of each sensor is $T^i({\bf x^i},{\bf d^i})$, instead of $T^i_s({\bf x^i},{\bf d^i})$ as given in Eq.~(\ref{eq::sys-completion-time-expected}), i.e., each sensor aims to minimize the system completion time. As it is centralized, we refer to this algorithm as the \emph{TT/C} algorithm.

{\bf Nearest-neighbor search:} The central entity maintains information about a set $\mathcal{M}$, $|\mathcal{M}|=M$ of past multi-view frames; the information maintained about a multi-view frame $j\in\mathcal{M}$ is (i) the vector ${\bf{q}}^j=(q_1^j,\ldots,q_{S}^j)$ of $S$ quantile vectors of length $Q-1$, where $q_s^j=(q_s^j(1), \ldots, q_s^j(Q-1)$ is the quantile vector for the frame captured by sensor $s$, and (ii) the near-optimal assignment profile $({\bf x^j},{\bf d^j})$ for multi-view frame $j$, computed using the \emph{TT/C} algorithm. 
We use $\mathcal{M}_d \subseteq \mathcal{M}$ to denote the subset of multi-view frames in $\mathcal{M}$ that have the same optimal assignment profile $\bf d$.

For multi-view frame $i$ the central entity uses the last value predictor for predicting the quantile vector $\tilde{\bf{q}}^i$, i.e., $\tilde{\bf{q}}^i={\bf{q}}^{i-1}$. It then finds the set ${\cal L}, |\mathcal{L}|=L,$ of multi-view frames in ${\cal M}$ that have nearest quantile vectors in terms of
$e^{ij}=
\sum_{s=1}^S\sum_{p=1}^{Q-1} (\tilde{q}_s^i(p)-q_s^j(p))^2$.
Finally, given the set ${\cal L}$, the central entity computes the completion time $t^{ij}$ for the predicted quantile vector $\bf{q}^{i}$ and the optimal assignment profiles $( {\bf x^j} {\bf d^j} )$, $j\in{\cal L}$ and it selects the assignment profile that results in the lowest completion time. The pseudo-code of the algorithm is shown in Algorithm \ref{alg::optimal-d}. \emph{AddToHeap}$(\mathcal{L},j)$ adds entry $j$ to $\mathcal{L}$, such that the entries with the smallest $e_{ij}$ values are kept. The sorted list of the $L$ selected entries can be generated in $\mathcal{O}(MSQ \log L)$ time, using a heap data structure of maximum size $L$.

An alternative to the nearest-neighbour search could be to use space partitioning techniques, such as the $k$-d tree~\cite{Bentley:1975:MBS:361002.361007}. Unfortunately, space partitioning is not efficient in our case due to the high dimension of the quantile vector, hence our choice of nearest neighbor search. 

\subsection{Coordinated operation}
\label{sec::coordinated-alg}
Under coordinated operation the central entity uses the algorithm described in Section \ref{sec::central-alg} for every $i \mod R = 0$ frames,
and provides the near-optimal processing node assignment and slice allocation $(\bf{d^i},\bf{x^i})$ to the sensors. The sensors keep the assignment profile $\bf{d^i}$ for the next $R$ frames, and update only $\bf{x^{i}}$, following one of the distributed algorithms defined in Section~\ref{seq::algorithms}.

\begin{algorithm}[t]
\caption{Assignment profile selection algorithm.}
\label{alg::optimal-d}
\begin{algorithmic}[1]
\small
\REQUIRE $\tilde{\bf{q}}^i$, $\mathcal{M}$, $L$
\ENSURE $({\bf x^i},{\bf d^i})$

\STATE{$\mathcal{L}=\{\}$}

\WHILE{$|\mathcal{L}| < L$}
    \STATE $e^{ij*} = \min_{j \in  {\mathcal{M}\setminus\mathcal{L}}} e^{ij}$
    \STATE $j^* = \arg\min_{j \in  {\mathcal{M}\setminus\mathcal{L}}} e^{ij}$
    \STATE AddToHeap$(\mathcal{L},j^*)$
\ENDWHILE
\STATE $j^* = \arg\min_{j \in  \mathcal{L}} t^{ij}$
\STATE $({\bf x^i},{\bf d^i})= ({\bf x^{j*}},{\bf d^{j*}})$

\RETURN{$({\bf x^i},{\bf d^i})$}

\end{algorithmic}

\end{algorithm}

\section{Numerical Results}
\label{sec::numerical}

\begin{figure}[t]
\includegraphics[width=\columnwidth]{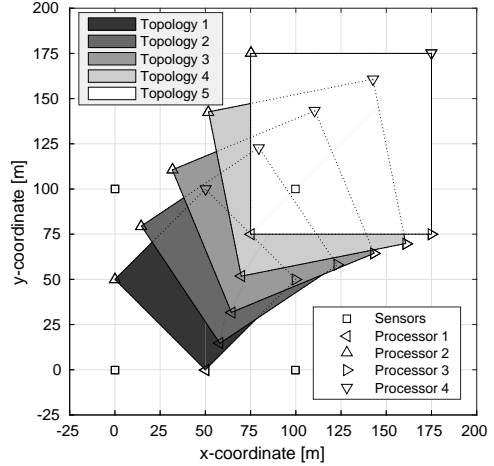}
\caption{The position of the nodes for the five considered topologies. Processing nodes are positioned at the corners of the shaded squares, with each shade representing a different topology. The positions of the sensor nodes remain fixed.}
\label{fig::topologies}
\vspace{-6mm}
\end{figure}

In the following we use simulations based on synthetic data, as well as based on video traces to evaluate the performance of the algorithms.
We consider five VSN topologies for the numerical evaluation, each with four sensor nodes and four processing nodes.
In each of the five topologies, the sensor nodes are placed at the corners of a square with side-length $\unit[100]{m}$.
For Topology 1, the processing nodes are placed at the mid-points of the square's sides, forming a second, smaller square.
For Topology 5, the processing nodes form a square with side-length $\unit[100]{m}$, shifted $\unit[75]{m}$ horizontally and vertically relative to the square formed by the sensor nodes.
Topologies 2-4 are intermediate steps in the transformation from Topology 1 to Topology 5.
In each step of the transformation, the square formed by the processing nodes is shifted $\unit[18.75]{m}$ horizontally and vertically, rotated $\unit[-11.25]{^\circ}$ around its center and increased in size by $\unit[7.32]{m}$ on all sides compared to the square formed in Topology 1.
Figure~\ref{fig::topologies} shows the positions of sensor and processing nodes in the five topologies.

The five topologies provide different challenges for the distributed algorithms.
In Topology 1, an optimal allocation is one where all sensor nodes allocate loads mainly to the processing nodes closest to them and possibly a smaller load to one of the remaining nodes.
Because of the symmetry of Topology 1, there are always at least two allocations where the system achieves the optimal completion time.
Therefore, the main challenge of the distributed algorithms is to converge to the same optimal allocation.
In Topology 5, all the sensors prefer the centrally located processing node 1, and the challenge is to move away from this, globally not optimal allocation.
The intermediary topologies can provide insight to how the performance of the system changes as the topology shifts from one extreme to the other.

We compute the transmission time coefficients $C_{s,n}$ based on the Shannon capacity with bandwidth $\unit[20]{MHz}$, noise-level $\unit[-70]{dBm}$, and free-space path loss assuming a carrier frequency of $\unit[2.4]{GHz}$.
Recall that we assume a MAC protocol that provides airtime fairness, scaling the actual transmission times by the number of transmitting nodes.
For each Topology, the processing time coefficients of all processing nodes are set to the same value as the transmission time coefficient of the slowest link, scaled by the number of sensors in the Topology, i.e. $P_n = S \cdot \min_{s,m} C_{s,m}, \forall n$.
This ensures that both the transmission and processing phase have a significant contribution to the total completion time, and that the ratio between transmission and processing time is comparable across all topologies.

We use BRISK~\cite{Leuten2011} for detecting local visual features with a filter width of up to $84$ pixels (i.e., $o=0.06$), and select the top $400$ interest points to compute the interest point distribution of each frame.
As the sensor nodes can not know the distribution of interest points in frame $i$ before frame $i$ has been processed, they assume that frame $i$ has the same interest point distribution as frame $i-1$.
This corresponds to the last value predictor used in~\cite{EDF2014DCOSS}, which was shown to provide a good trade-off between prediction accuracy and computational complexity.

For each topology we evaluate the completion time of the system under the \emph{MO} and \emph{TT} scenarios with both asynchronous and synchronous/S revisions, i.e, four algorithms, with and without coordination. 
For obtaining the initial estimate of the transmission and processing time coefficients, the sensors use a bootstrap cutpoint location vector $x_s^i$ in which $y_{s,n}^i = \max(o,1/N)$.
Results are evaluated over 500 frames. 

\subsection{Evaluation with synthetic data}
\label{sec::numerical-synthetic}
We first evaluate the algorithms on a sequence of multi-view frames in which every frame has a uniform interest point distribution.
The uniform distribution has been shown to be a good approximation of the average distribution of interest points in frames~\cite{KhanTMM15,KhanDSP13}.
This configuration allows us to observe the convergence properties of the algorithms and how convergence affects the completion time.
We refer to the asynchronous and synchronous \emph{MO} and \emph{TT} algorithms as \emph{MO/A}, \emph{MO/S}, \emph{TT/A}, and \emph{TT/S} respectively. 

\begin{figure}[t]
\includegraphics[width=\columnwidth]{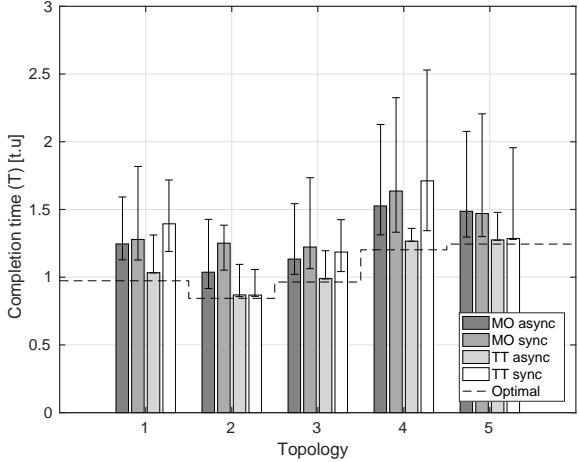}
\caption{The mean, maximum and minimum completion times achieved by the different algorithms for each of the considered topologies. Synthetic data.}
\label{fig::barsummary}
\vspace{-6mm}
\end{figure}

Figure~\ref{fig::barsummary} shows the mean completion times obtained with the four distributed algorithms for the five topologies.
The error bars show the minimum and the maximum completion times, i.e., the variation around the mean, and the dashed line shows the optimal system completion time for each topology.
The figure shows that only the \emph{TT/A} algorithm manages to achieve an average completion time close to the optimal for all topologies; whether or not the performance of the \emph{TT/S} algorithm is close to optimal depends very much on the topology. 
The reason is that asynchronous revisions and the information the sensors receive about each other's allocations and assignments in the \emph{TT} scenario deters them from choosing assignment profiles that would lead to very poor performance. 
Similarily, we can observe that \emph{MO/S} regularly performs worse than \emph{MO/A}, again due to simultaneous revisions of the assignments, which cause large completion time fluctuations.
Finally, we note that the largest range of completion times is obtained for Topologies 4 and 5,
which is due to that in these two topologies sensors compete for the use of processing node 1,
which is closest to all sensors. We can thus conclude that an asymmetric placement of processing nodes is detrimental to system performance in general.
Topology 4 results in large variation in completion times, and therefore we consider this topology for detailed evaluation.
Similar results were observed for other topologies.

\begin{figure}[t]
\includegraphics[width=\columnwidth]{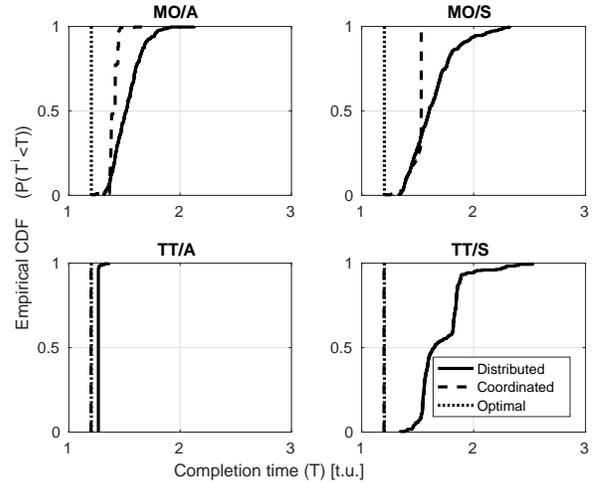}
\caption{Cumulative distribution function of the completion time for Topology 4 with Synthetic data.}
\label{fig::cdf_top3}
\vspace{-6mm}
\end{figure}

Figure~\ref{fig::cdf_top3} shows the cumulative distribution functions (CDFs) of the system completion time of the four algorithms for Topology 4, and allows us to study the convergence properties of the algorithms.
The solid lines show the CDF of the completion times of the distributed algorithms.
To interpret the figure, observe that an algorithm will produce a constant completion time after it reaches an equilibrium, hence a step function-like CDF.
We can see that, as stated in Corollary~\ref{cor::tt-async-converges}, the distributed \emph{TT/A} algorithm converges, while the other algorithms fail to do so.
The reason for the non-convergence of the distributed \emph{MO} algorithms is the mis-prediction of the processing time coefficients based on the processing time coefficients measured for the previous frame:
when sensor $s$ updates its slice sizes, its estimates $\tilde{P}_{s,n}$ of the processing time coefficients are correct only if the proportion of the slice size from $s$ to the total size of the slices from all sensors remains constant.
As the estimation of the processing time coefficients changes over time, eventually a frame $i$ is reached where the sensors will change the assignment function $d_s^i$, and changing the assignment function has a large impact on the experienced processing time coefficients of all sensors in the system, preventing convergence.
Similarly, the non-convergence of the distributed \emph{TT/S} algorithm is due to that the sensors update their allocations based on allocations and assignments observed for the previous frame. 
If any of these changes between two subsequent frames, the change is likely to cause other sensors to update their allocations and prevents the sensots from reaching a stable allocation. To summarize, without coordination only the \emph{TT/A} algorithm provides stable and low average completion times. Therefore, we now investigate the potential benefits of coordination.

The dashed lines in Figure~\ref{fig::cdf_top3} show the completion time CDFs of the four algorithms when central coordination is used. 
Since the interest point distribution is the same in all frames, the optimal assignment profile is constant, hence we let the coordinator provide the optimal assignemnt profile at frame $1$ and we then allow the sensors to update their allocations in the subsequent frames. 
We can observe that with coordination both \emph{TT} algorithms remain in the optimal allocation, as none of the sensor nodes can decrease its completion time compared to the one provided by the central coordinator.
Interestingly, the same does not hold for \emph{MO}. 
Under the \emph{MO} scenario, the sensors deviate from the allocation provided by the central coordinator, and achieve an average completion time that is higher than the optimal.
To summarize, central coordination improves the stability for all algorithms. 

\subsection{Video trace based evaulation}

We now turn to the evaluation of the algorithms using a multi-camera surveillance video trace called \emph{Parking lot}, which is a surveillance video data set proposed for the evaluation of algorithms for tracking humans~\cite{khanshahTPA2009}.
The data set consists of the video traces captured in a parking lot by four surveillance cameras at a resolution of $720 \times 480$ pixels and frame rates of $\unit[30]{fps}$, showing $9$ people moving around.
We do not perform background subtraction on the traces prior to interest point detection, and thus there are a number of interest points belonging to the background that do not change their locations.
The cameras are approximately located at the corners of a square and are facing the center of the square, similar to the positions of the sensors in Topologies 1-5.
As a baseline for comparison for the algorithms we use the \emph{TT/C} algorithm
to compute a near-optimal solution based on the interest point distribution of the current frame. We refer to this baseline as the \emph{Oracle}.

\begin{figure}[t]
\includegraphics[width=\columnwidth]{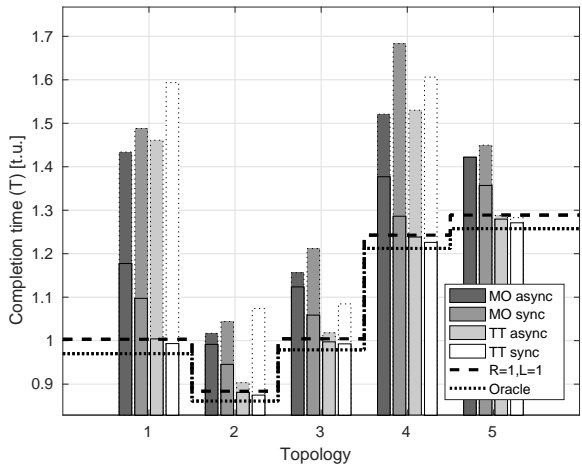}
\caption{Completion time of the algorithms for each of the five topologies. Solid borders mark algorithms with coordination, dotted borders mark algorithms without coordination. \emph{Parking lot} data set, $R=16$.}
\label{fig::barplots}
\vspace{-6mm}
\end{figure}

Figure~\ref{fig::barplots} shows the completion time for the four algorithms and five topologies for the \emph{Parking lot} data set.
Solid borders show the completion time for a coordinated system with a refresh interval of $R=16$, and dotted borders show the completion time without coordination.
The figure shows that coordination can provide significant completion time reductions, especially for synchronous/S algorithms where simultaneous changes of the assignment vector can result in poor assignment profiles without coordination.
It is interesting to observe that with coordination the algorithms using synchronous/S revision opportunities achieve lower completion times than their asynchronous versions.
While without coordination \emph{TT/S} performs poorly, with coordination  it achieves completion times close to those of the \emph{Oracle}. 
Finally, we note that the relative performance of the algorithms is similar for all the topologies. 

\begin{figure}[t]
\includegraphics[width=\columnwidth]{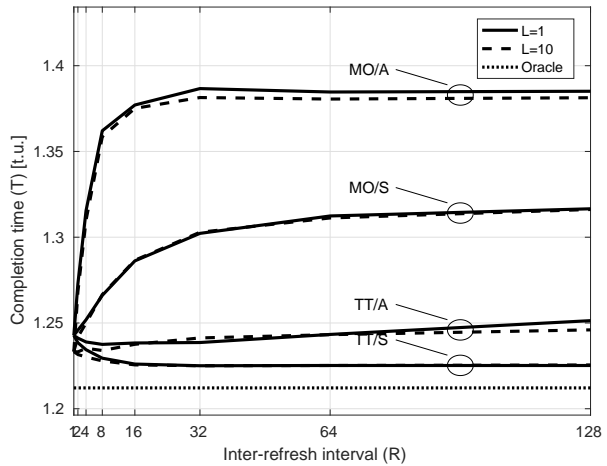}
\caption{Completion time of the algorithms for increasing inter-refresh interval with Topology 4. \emph{Parking lot} data set.}
\label{fig::T-mse-vs-R}
\vspace{-6mm}
\end{figure}

Since coordination requires both computation at the coordinator and signaling to the sensors, we now evaluate how the inter-refresh interval $R$ and the number of evaluated assignment profiles $L$ affect the achievable performance of the coordinated operation.
Figure~\ref{fig::T-mse-vs-R} shows the completion time of the four algorithms as a function of the inter-refresh interval $R$ and for $L=1$ and $L=10$, for Topology 4.
We again show results for Topology 4 because of the large difference in completion times observed in Figure~\ref{fig::barplots}.
Interestingly, the figure shows that increasing the inter-refresh interval from $R=1$ has opposite effects under the \emph{MO} and the \emph{TT} scenarios.
Under the \emph{MO} scenario the opportunity given to the sensors to update the allocation profiles results in increasing completion times, consistent with our observations in Section~\ref{sec::numerical-synthetic}. 
Unlike for \emph{MO}, under the \emph{TT} scenario the completion times decrease as the inter-refresh interval $R$ increases. While this is seemingly counter-intuitive, observe that the allocation provided by the coordinator may be suboptimal, as it is computed usig a quantile-based approximation and nearest-neighbor search, and this makes it possible for the sensors to improve the allocation profile through subsequent iterations. 

It is also interesting to note that with coordination the algorithms using synchronous/S revision opportunity achieve consistently lower completion times than the corresponding asynchronous algorithm. The reason is that under coordination the sensors are not allowed to change the assignment profile, which avoids very poor assignment profiles to be chosen, but simultaneous updates allow the sensors to improve the allocation profile faster.
It is also important to note that the completion time does not change significantly as the inter-refresh interval $R$ increases from $16$ to $64$, which allows us to achieve consistently low completion time with infrequent coordination.

Finally, comparing the results for $L=1$ and $L=10$ we can observe that evaluating more dictionary entries has little impact on the completion time, especially for large inter-refresh intervals. Thus one can achieve low completion times with very low overhead by providing low frequency coordination based on a simple dictionary lookup.

\begin{figure}[t]
\includegraphics[width=\columnwidth]{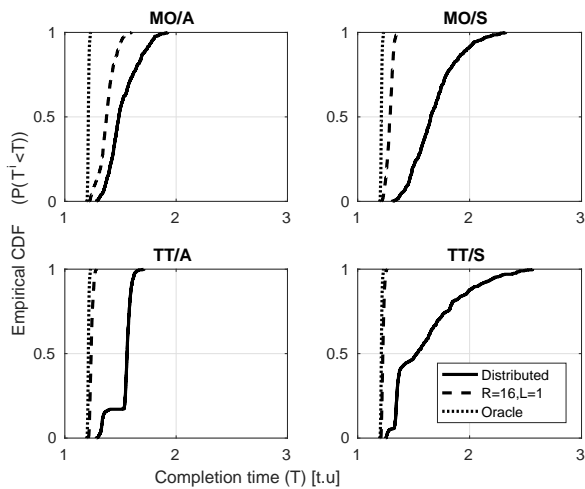}
\caption{Cumulative distribution function of the completion time for Topology 4. \emph{Parking lot} data set, $R=16$, $L=1$.}
\label{fig::cdf_top3_trace}
\vspace{-6mm}
\end{figure}

Figure~\ref{fig::cdf_top3_trace} shows the CDF of the completion times of the four algorithms for Topology 4, with and without coordination. 
Comparing Figure~\ref{fig::cdf_top3} and Figure~\ref{fig::cdf_top3_trace} we see that without coordination the CDFs of the completion times have similar shapes for the synthetic data and for the \emph{Parking lot} data set. In particular, the CDFs show that there is a significant tail probability, i.e., a non-negligible probability that the completion time significantly exceeds the mean completion time. 
It is therefore important to note that coordination not only decreases the mean completion time, but it also reduces the probability of encountering completion times that are significantly higher than the average completion time. 
This is especially apparent for the \emph{TT} scenario, under which coordination results in an almost deterministic distribution, i.e., almost constant completion times.

\section{Conclusion and Future Work}
\label{sec::conclusion}
In this paper we considered the problem of minimizing the completion time of distributed feature extraction in visual sensor networks consisting of several camera sensors and image processing nodes.
We proposed distributed solutions, where each camera sensor decides locally the set of processing nodes to be used, the schedule of the data transmission and the size of the frame slices.
We defined four algorithms for the distributed allocation of processing load that differ in terms of the information available to the sensors, and in the revision opportunity used. We extended the distributed solutions by the support of a central coordinator.
We evaluated the algorithms using simulations based on both synthetic data, and on video traces.
Our results show that, independently from the topology considered, fully distributed algorithms require both asynchronous revisions and accurate information on transmission and processing times to achieve completion times close to the optimal.
The support of the central coordinator gives more stable performance, though it may lead to higher average completion times for the static synthetic traffic.
Results using the video trace show that central coordination provides a decreased completion time for all algorithms, even when coordination is provided infrequently. Therefore, we propose the combination of distributed allocation of processing tasks with limited central coordination to provide good visual analysis performance in multi-camera sensor networks with small signalling overhead.

\balance
\begin{footnotesize}
\bibliographystyle{IEEEtran}
\bibliography{refs.bib}
\end{footnotesize}

\pagebreak
\end{document}